\documentclass[preprints,article,accept,moreauthors,pdftex]{Definitions/mdpi} 

\usepackage{algorithmicx}
\usepackage[ruled]{algorithm}
\usepackage{algorithm}
\usepackage{algpseudocode}

\usepackage{glossaries}
\newacronym{mavs}{MAVs}{Micro Aerial Vehicles}
\newacronym{mav}{MAV}{Micro Aerial Vehicle}

\usepackage{amsthm}
\newtheorem{thm}{Theorem}[section]
\newtheorem{lemma}[thm]{Lemma}

\usepackage{hyperref}
\hypersetup{
  colorlinks=true,
  linkcolor=black,
  urlcolor=blue!70!black,
  citecolor=blue!70!black,
}

\firstpage{1} 
\makeatletter 
\setcounter{page}{\@firstpage} 
\makeatother
\pubvolume{1}
\issuenum{1}
\articlenumber{0}
\pubyear{2021}
\copyrightyear{2021}
\datereceived{} 
\dateaccepted{} 
\datepublished{} 
\hreflink{https://doi.org/} 

\makeatletter
\let\c@lofdepth\relax
\let\c@lotdepth\relax
\makeatother
\usepackage{subfigure}




\Title{Error Tolerant Path Planning for Swarms of Micro Aerial Vehicles with Quality Amplification}

\Author{Michel Barbeau $^{1}$*, Joaquin
  Garcia-Alfaro $^{2}$*, Evangelos
  Kranakis$^{1}$*, and Fillipe Santos$^{3}$}

\address{%
$^{1}$ \quad School of Computer Science, Carleton University, Ottawa, Canada
\\
$^{2}$ \quad Institut Polytechnique de Paris, Telecom SudParis,  France
\\
$^{3}$ \quad University of Campinas, Brazil\\
}

\TitleCitation{Error Tolerant Path Planning for Swarms of Micro Aerial Vehicles with Quality Amplification}



\AuthorNames{Michel Barbeau, Joaquin Garcia-Alfaro, Evangelos Kranakis, and Fillipe Santos}

\AuthorCitation{Barbeau, M.; Garcia-Alfaro, J.; Kranakis, E.; Santos, F.}



\corres{Correspondence: barbeau@scs.carleton.ca (M.B.) jgalfaro@ieee.org (J.G-A.) kranakis@scs.carleton.ca (E.K.); Tel.: +1-613-520-2600x1644 (M.B.) 
+33-160-76-47-22 (J.G-A.) +1-613-520-2600x8090 (E.K.)}

\firstnote{These authors contributed equally to this work.} 



\abstract{We present an error tolerant path planning algorithm for \gls*{mav}
swarms. We assume navigation without
GPS-like techniques. The MAVs find their path
using sensors and cameras, identifying and following
a series of visual landmarks. The visual landmarks lead the MAVs
towards their destination. MAVs are assumed to be unaware
of the terrain and locations of the landmarks. 
They hold a-priori information about landmarks, whose interpretation is prone to errors. 
Errors are of two types,
{\em recognition} or {\em advice}. Recognition errors follow from misinterpretation of sensed data or a priori information, or
confusion of objects, e.g., due to faulty sensors. Advice errors are consequences of outdated or wrong information about landmarks,
e.g., due to weather conditions. Our path planning algorithm is cooperative. MAVs communicate and exchange
information wirelessly, to minimize the number of {\em recognition} and {\em advice} errors. Hence, the quality of the navigation decision process is amplified. Our solution successfully achieves an adaptive error tolerant
navigation system. Quality amplification is parametetrized with respect
to the number of MAVs. We validate our approach with
theoretical proofs and numeric simulations.}

\keyword{Micro Aerial Vehicles (MAVs), Autonomous Aerial
Vehicles, \gls*{mav} Swarm, Goal Location, Quadcopters, Information Sharing,
Localization, Location, Path Planning.} 








\begin{document}

\section{Introduction}

\noindent \gls*{mavs} are a popular type of drones. They are equipped with
sensors and cameras, enabling hovering and navigation over complex
three dimensional terrains. They are used in a variety of
applications, including sewer inspection \cite{sewercopters}, search
and rescue operations \cite{searchRescue}, and parcel delivery
\cite{dhlParcelcopter}. Large terrains can be covered by so called
swarms, namely collaborative teams of \gls*{mavs} that exchange
information gathered during navigation. They are required to be
resilient to failures of all kinds, such as during navigation
or due to sensor malfunctions. We are interested in designing
swarm algorithms that are resilient in presence of failures.

We present an error tolerant path planning algorithm for \gls*{mav}
swarms. We assume \gls*{mav} navigation without using any
GPS-like technique. \gls*{mavs} find their way using
 sensors and cameras, in order to identify and follow
series of visual landmarks. The visual landmarks lead the \gls*{mavs}
towards the destinations. We assume that the \gls*{mavs} are
unaware of the terrain and locations of the landmarks. Figure
\ref{fig:figure1} shows the idea. 
A number of landmarks are highlighted.
A swarm of \gls*{mavs} collectively
identify a series of such landmarks over an inspected terrain. The
identification of landmarks determines paths that must be followed.

\begin{figure}[!t]
\centering
\includegraphics[width=14cm]{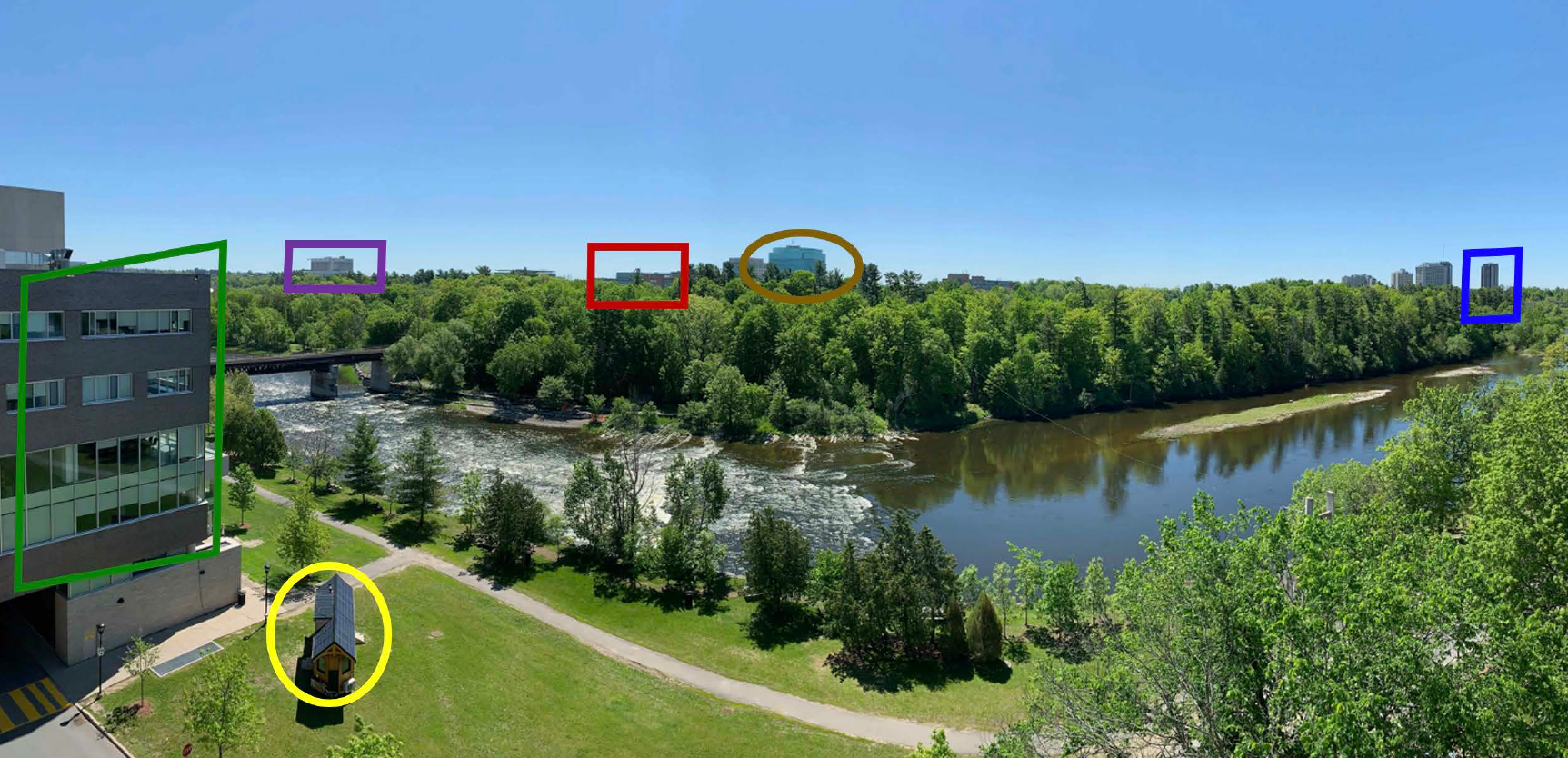}
\caption{Sample picture taken from an aerial vehicle, together
with the identification of six landmarks at the campus of Carleton University.\label{fig:figure1}}
\end{figure}

Swarms fly over terrains
comprising multiple landmarks. The landmarks
also include the starting and terminal points of a well-defined path.
The \gls*{mavs} may hover over the landmarks either on their own or in
formation. They may hop from anyone landmark to any other. The
landmarks are identified as vertices.
The resulting
system forms a complete graph. Recall that the \gls*{mavs} are
unaware of the terrain and locations of the landmarks. However, they
have the capability to visually recognize them. Furthermore, the
\gls*{mavs} may communicate and exchange information wirelessly as
long as they are within communication range of each other.  The
\gls*{mavs} are required to find a flight path from the starting
point, leading to the terminal point.

We assume that the \gls*{mavs} hold information about landmarks.
Interpretation of this information is error prone. We consider
two types of errors: {\em recognition} and {\em advice}.
Recognition errors are due to misinterpretation of sensed data or
a priori information, or confusion of objects, e.g., due to faulty
sensors. Advice errors follow from changing or wrong information
associated to landmarks, e.g., due to weather conditions. Path
planning builds upon swarm cooperation. \gls*{mavs}
communicate and exchange information wirelessly, with the aim to reduce the amount of {\em recognition} and {\em advice} errors.  Collaboratively exchanging information, the \gls*{mavs}
amplify the quality of decisions pertaining to the navigation process. The swarm gets equipped with an adaptive error
tolerant navigation, where the degree of quality is related to the number of participating \gls*{mavs}.

We show that the approach augments the probability of navigation
correctness proportionally to the number of \gls*{mavs} in a swarm,
 when  wireless communications allow
cooperation and exchange of information about landmarks. Indeed, single \gls*{mav} navigation is directly affected by
recognition and advice  errors. 
The \gls*{mav} can get disrupted and lost. 
With an increasing number of \gls*{mavs} in a swarm, communications and exchange of information 
take place. Quality of sensor fusion increases. We analyze  reduction of error probability induced by this
algorithm. Quality amplification is demonstrated both
analytically and with simulation.

\medskip

\noindent \textbf{Paper Organization\footnote{This a revised and extended version of a paper~\cite{barbeau2019quality} which appeared in the proceedings of IEEE GLOBECOM 2019 Workshops: IEEE GLOBECOM 2019 Workshop on Computing-Centric Drone Networks in Waikoloa, Hawaii, Dec 9-14, 2019.} ---} Section~\ref{sec:relatedWork} 
reviews related work.
Sections~\ref{sec:algo} and~\ref{quality:sec} present
our navigation algorithm. Section~\ref{sec:sim} evaluates the work. Section~\ref{sec:conc} concludes the paper.

\section{Related Work}
\label{sec:relatedWork}

\noindent Surveys on path planning algorithms for unmanned aerial 
vehicles have
been authored by Goerzen et al.~\cite{Goerzen2009} and Radmanesh et
al.~\cite{Mohammadreza2018}. Several algorithms build on solutions
originally created for computer networks. Some of the proposed
solutions leverage algorithms created in the field of classical
robotics, such as approaches using artificial potential
functions~\cite{khatib1986}, random trees~\cite{lavalle1998} or
Voronoi diagrams~\cite{lavalle2006}. Path planning may be addressed in
conjunction with team work and formation control~\cite{turpin2014}.
There are ideas that have been tailored specifically to
quadcopters~\cite{rizqi2014}.

Our research is closely related to works on navigation using
topological maps~\cite{Maravall2017}. Navigation does not rely on
coordinates. The \gls*{mavs} find their way recognizing landmarks.
Weinstein et al. \cite{Weinstein2018visual} propose the use of visual
odometry as an alternative localization technique to, e.g., GPS-like
techniques. The idea is as follows. The \gls*{mavs} use their onboard
cameras (e.g., downward facing cameras), combined by some inertial
sensors, to identify and follow a series of visual landmarks. The
visual landwarks lead the \gls*{mav} towards the target destination.
Unlike GPS, the technique allows the \gls*{mav} to operate without
boundaries in both indoor and outdoor environments. No precise
information about concrete visual odometry techniques are reported by
Weinstein et al. in their work. However, some ideas can be found in
\cite{Maravall2013a,Maravall2017}.

Maravall et al. \cite{Maravall2013a,Maravall2017} propose the use of
probabilistic knowledge-based classification and learning automata for
the automatic recognition of patterns associated to the visual
landmarks that must be identified by the \gls*{mavs}. A series of
classification rules in their conjunctive normal form (CNF) are
associated to a series of probability weights that are adapted
dynamically using supervised reinforcement learning
\cite{narendra2012learning}. The adaptation process is conducted using
a two-stage learning procedure. During the first process, a series of
variables are associated to each rule. For instance, the variables
associated to the construction of a landmark recognition classifier
are constructed using images' histogram features, such as standard
deviation, skewness, kurtosis, uniformity and entropy. During the
second process, a series of weights are associated to every variable.
Weights are obtained by applying a reinforcement algorithm, i.e.,
incremental R-L algorithm in
\cite{narendra2012learning,Maravall2017}, over a random environment.
As a result, the authors obtain a specific image classifier for the
recognition of landmarks, which is then loaded to the \gls*{mavs}.

The resulting classifiers had been tested via experimental work.
\gls*{mavs} with high-definition cameras, recording images at a
resolution of $640$x$360$ pixels, at the speed of $30$ fps (frames per
second) are loaded a given classifier, to evaluate a visual
classification ratio. Each experiment consists of building a
classifier and getting the averaged ratio. Results by Maravall et al.
in \cite{Maravall2013a,Maravall2013b} show an average empirical visual
error ratio of about $20$\% (i.e., $80$\% chances of properly
identifying the landmarks, on average). The results are compared to
some other well-established pattern recognition methods for the visual
identification of objects, such as minimum distance and $k$-nearest
neighbor classification algorithms. The previous contribution is complemented by Fuentes et al. and
Maravall et al. in \cite{Fuentes2014,Maravall2017}, by combining the
probabilistic knowledge-based classifiers with bug algorithms
\cite{lavalle2006}, to provide the \gls*{mavs} with a navigation
technique to traverse a visual topological map composed of several
visual landmarks. A technique is used to compute the entropy of the
images captured by the \gls*{mav}, in case a decision must be taken
(e.g., to decide whether going south or north directions). The idea is
as follows. The \gls*{mav} uses the camera onboard, and takes images
about several directions. Afterward, it processes the images to chose
a given direction. The lower the entropy of a captured image, the
lower the probability of going towards an area containing visual
landmarks. Conversely, the higher the entropy of a captured image, the
higher the probability of going towards an area surrounded by
landmarks. Using this heuristic, the \gls*{mav} collects candidate
images with maximum entropy (e.g., by driving the \gls*{mav} forward
and backward some meters) prior executing a bug algorithm to locate
the landmarks \cite{Maravall2017}. 

\section{Error Prone Navigation}
\label{sec:algo}

\noindent We identify the landmarks with the $n$ vertices of a complete 
graph $G=(V,E)$.
Starting at $s$ and ending at $t$, the \gls*{mavs} are seeking a flight
path connecting $k+1$ vertices
$$
s:=v_0, v_1, \ldots, v_i, v_{i+1}, \ldots, v_k:=t
$$
where $v_0, v_1, \ldots, v_i, v_{i+1}, \ldots, v_k$ are in $V$, see
Figure~\ref{fig:net2}. The \gls*{mavs} have to navigate and find a flight
path from $s$ to $t$ using clues. When hovering over an area, a \gls*{mav}
acquires data through its camera and other sensors, which may be
visual, acoustic, etc. This data is used for landmark searching. A
priori, the \gls*{mavs} are given clues and specific characteristics about
the landmarks. For example, the \gls*{mavs} may be seeking a green door or
a tall building.

\begin{figure}[!t]
	\begin{center}
		\includegraphics[width=11cm]{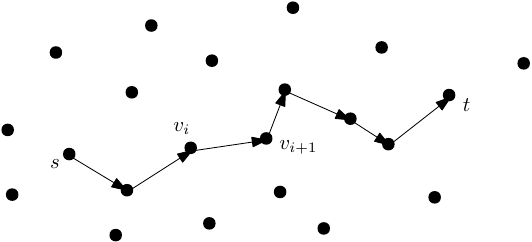}
	\end{center}
	\caption{Flight path from source $s$ to destination $t$. Edge
          $(v_i, v_{i+1})$ is an intermediate segment connecting
          landmarks $v_i$ and $v_{i+1}$.}
	\label{fig:net2}
\end{figure}

The landmarks provided have a-priori information whose interpretation
(by the \gls*{mavs}) is prone to errors. We distinguish two types of
errors, namely, {\em recognition} and {\em advice}. Recognition errors
are due to misinterpretation of sensed data and a-priori information
or confusion of objects. For example, a \gls*{mav} has found a green door
which in fact is not a door but rather a window. The recognized object
is incorrect. We assume that for some real number $p$ in the interval
$[0,1]$, the value $p$ is the probability that a \gls*{mav} performs
recognition erroneously and $1-p$ that it is correct.

Advice errors about landmarks occur because the information provided
is not up to date or even wrong. For example, upon finding a landmark
a \gls*{mav} is advised to traverse a certain distance within the terrain
in direction north where it will find the next landmark, say a
restaurant, but this information is wrong because the restaurant is no
longer there. We assume that for some real number $q$ in the interval
$[0,1]$, the value $q$ is the probability that the advice provided to
a \gls*{mav} about a landmark is invalid or erroneously interpreted and
$1-q$ that it is valid and correctly interpreted.

Recognition and advice errors are independent of each other. An
important point to be made is that we assume that recognition and
advice are random processes. For all \gls*{mavs}, we make the assumption
that recognition errors are independent and identically distributed
and advice errors are also independent and identically distributed.
The \gls*{mavs} act independently of each other. Moreover, the outcome of
the recognition process is random with probability of success that
depends on the parameter $p$. A similar observation applies to the
advice process. As a consequence, we can use this to our advantage so
as to improve the recognition and advice mechanisms for swarms of
\gls*{mavs}.

Assume a \gls*{mav} is navigating the terrain through a flight path,
denoted as $P$, consisting of $k$ vertices $v_0 :=s, v_1, \ldots, v_i,
v_{i+1}, \ldots, v_k:= t$ from $s$ to $t$. An edge $\{ v_i, v_{i+1}
\}$ corresponding to a segment of flight path $P$ is said to be
correctly traversed if and only if the advice provided about the landmark
associated with vertex $v_i$ is valid and correctly interpreted and
the landmark associated with vertex $v_{i+1}$ is correctly recognized.
For $i = 0, \ldots , k-1$, the flight path $P$ is correctly traversed
if and only if each of its segment defined by an edge $\{ v_i, v_{i+1}
\}$ is correctly traversed.

At the start, a \gls*{mav} is given a flight plan. The flight plan defines
the flight path $P$. For each vertex $v_i$,
$i=0,\ldots,k-1$, the flight plan comprises advice for searching the
next landmark, such as directional data. For each vertex $v_{i+1}$,
the flight plan contains recognition data, such as landmark
characteristics. A flight plan is correctly performed solely if every
single segment is correctly traversed.

\begin{algorithm}[!t]
	\setlength\baselineskip{18pt}
	\caption{Majority Recognition Algorithm for a swarm of $m$ \gls*{mavs}}\label{alg:rec}
	\begin{algorithmic}[1]
		\State  {Each \gls*{mav} performs landmark recognition}
		\State {\gls*{mavs} exchange information}
		\If {there is a landmark common to the majority (of at least $\lceil m/2 \rceil$ \gls*{mavs})}
		\State {the \gls*{mav} swarm adopts this common landmark}
		\Else
		\State {every \gls*{mav} adopts its own recognized landmark}
		\EndIf
	\end{algorithmic}
\end{algorithm}

\begin{algorithm}[!t]
\setlength\baselineskip{18pt}
\caption{Majority Advice Algorithm for a swarm of $m$ \gls*{mavs}}\label{alg:adv}
\begin{algorithmic}[1]
\State  {Each \gls*{mav} takes the advice provided for the  visited landmark}
\State {\gls*{mavs} exchange information}
\If {there is a majority advice interpretation (for at least $\lceil m/2 \rceil$ \gls*{mavs})}
\State {all \gls*{mavs} follow this common advice interpretation}
\Else
\State {the \gls*{mavs} follow their own advice interpretation}
\EndIf
\end{algorithmic}
\end{algorithm}

We obtain the following quantitative characterization of segment correctness and flight path in terms of recognition and advice probabilities.

\begin{lemma}\label{lemma:singledroneflightpath}
A flight plan leading to a path of length $k$ is correctly performed with probability $(1-p)^k (1-q)^k$.
\end{lemma}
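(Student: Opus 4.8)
The plan is to reduce the statement to the per-segment correctness probability and then combine segments via independence. First I would recall the definitions supplied just above the lemma: a path of length $k$ is the sequence $v_0 := s, v_1, \ldots, v_k := t$, which consists of exactly $k$ edges (segments) $\{v_0,v_1\}, \{v_1,v_2\}, \ldots, \{v_{k-1},v_k\}$. By definition, the segment $\{v_i, v_{i+1}\}$ is correctly traversed if and only if two events occur simultaneously: the advice associated with $v_i$ is valid and correctly interpreted, and the landmark $v_{i+1}$ is correctly recognized. The first event has probability $1-q$ and the second has probability $1-p$, by the modeling assumptions on $p$ and $q$.

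Next I would compute the probability that a single segment is correctly traversed. Since the paper states that recognition and advice are independent random processes, the two events defining correct traversal of $\{v_i, v_{i+1}\}$ are independent, so their joint probability factors as $(1-q)(1-p)$. This handles one segment and is the atomic building block for the whole path.

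Then I would assemble the $k$ segments. A flight plan is correctly performed if and only if every one of its $k$ segments is correctly traversed, so the event ``path correctly performed'' is the intersection of the $k$ per-segment events. The key observation is that these events involve disjoint underlying random outcomes: the recognition outcomes at $v_1, \ldots, v_k$ and the advice outcomes at $v_0, \ldots, v_{k-1}$ are $2k$ distinct draws, which by the independent-and-identically-distributed assumption are mutually independent. (In particular, although vertex $v_{i+1}$ appears in two consecutive segments, its \emph{recognition} outcome belongs to segment $\{v_i,v_{i+1}\}$ while its \emph{advice} outcome belongs to segment $\{v_{i+1},v_{i+2}\}$, and these are different, independent events, so there is no hidden correlation between adjacent segments.) Multiplying the $k$ per-segment probabilities then gives $\bigl[(1-p)(1-q)\bigr]^{k} = (1-p)^{k}(1-q)^{k}$, as claimed.

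The main obstacle, such as it is, is not a calculation but a modeling check: one must verify that the per-segment correctness events are genuinely mutually independent and that no elementary event is double-counted across segments. I expect this to follow directly from the stated i.i.d.\ assumptions once the shared vertices are analyzed as above, after which the result is a routine product of independent Bernoulli successes.
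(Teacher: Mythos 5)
Your proposal is correct and follows essentially the same route as the paper's proof: compute the per-segment correctness probability $(1-p)(1-q)$ from the independence of recognition and advice, then multiply over the $k$ segments using the i.i.d.\ assumptions. Your additional remark disentangling the recognition and advice outcomes at shared vertices is a careful justification of the independence across segments that the paper takes for granted.
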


\begin{proof}
For individual segments $i = 0, \ldots , k-1$, we have
\begin{align*}
\Pr [ \{ v_i, v_{i+1}\} \mbox{ is correct}]
&= \Pr [ \mbox{advice at $v_i$ and recognition} \\
\mbox{at $v_{i+1} $ are correct}] & = (1-p) (1-q).
\end{align*}
For the whole flight plan for path $P$, we have
\begin{align*}
\Pr [ P \mbox{ is correct}]
&= \Pr [ \forall i (\{ v_i, v_{i+1}\}  \mbox{ is correct}) ]\\
&= \prod_{i=0}^{k-1} \Pr [ \{ v_i, v_{i+1}\}  \mbox{ is correct} ]\\
&= (1-p)^k (1-q)^k.
\end{align*}
This proves the lemma.
\end{proof}

Lemma~\ref{lemma:singledroneflightpath} is valid for a single \gls*{mav}
that is recognizing landmarks and navigating from a start point to a
terminal point. In Section~\ref{quality:sec} it is shown how to improve
the probability of correctness for a swarm of co-operating \gls*{mavs} that
communicate and exchange information with each other.

In a swarm, we may take advantage of
communications and collaboration among the \gls*{mavs} so as to amplify the
quality of a-priori and sensed data. To this end, we use the principle
of maximum likelihood.

Algorithms~\ref{alg:rec} and~\ref{alg:adv} define the main processes. Algorithm~\ref{alg:rec} applies majority recognition. Algorithm~\ref{alg:adv}
applies the advice. It should be emphasized that the amplification of recognition and advice, implied by the majority rule used in the two algorithms above, is based on a binary decision. To illustrate this fact, consider the case of amplification of the quality of recognition. First of all, it is assumed that all the \gls*{mavs} in the swarm run the same visual recognition software. Hence, the set of possible outcomes of the \gls*{mavs}' visual systems is partitioned into two mutually disjoint sets. The first set can be interpreted as the container of positive outcomes. The second set as the container of negative outcomes. This is to be the same for all the \gls*{mavs}. For a binary decision example, consider a swarm of five \gls*{mavs} which is to decide whether the object viewed is either a Door (D) or a Window (W). If the answers of the individual \gls*{mavs} are D, W, D, W, D, then the majority output will be Door.

A similar interpretation is being used for the advice algorithm software which is executed by ``smart landmarks'' giving advice to the \gls*{mavs}, i.e., providing the direction the swarm should follow next. For a binary example with a swarm of five \gls*{mavs}, assume that the landmarks may give either the answer North (N) or South (S). If the advice collected by the \gls*{mavs} are N, S, S, N, N, then the majority decision will be North.

\section{Quality Amplification and Error Reduction}
\label{quality:sec}

\subsection{Reducing the error probability}

The collaborative landmark recognition process defined by
Algorithm~\ref{alg:rec} applies to a swarm composed of $m$
\gls*{mavs}.
Let $p_m$ denote the  error probability of the majority rule applied in
Algorithm~\ref{alg:rec}; this is given by the following formula.
\begin{align}
p_m &= 1 - \label{majorityfmla}
\sum_{i= \lceil m/2 \rceil }^{m} {m \choose i} (1-p)^i p^{m-i}
\end{align}
Now we show that the majority rule improves the error probability $p$.
\begin{lemma}\label{lemma:plowerthanonehalf}
For $p < 1/2$, we have the following inequality
\begin{align}
1 - p &<  \label{maj:rec1}
p^{m} \sum_{i= \lceil m/2 \rceil }^{m}  {m \choose i} \left(\frac1p -1 \right)^i  .
\end{align}
\end{lemma}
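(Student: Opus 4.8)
The plan is to first simplify the right-hand side of~(\ref{maj:rec1}) so that the claim becomes a transparent statement about the majority-rule error probability $p_m$ of~(\ref{majorityfmla}). Since $p<1/2$, note that $\frac1p-1=\frac{1-p}{p}$, hence $p^m\binom{m}{i}\left(\frac1p-1\right)^i=\binom{m}{i}(1-p)^i p^{m-i}$ for every $i$. Summing over $i=\lceil m/2\rceil,\dots,m$ shows that the right-hand side of~(\ref{maj:rec1}) equals $\sum_{i=\lceil m/2\rceil}^{m}\binom{m}{i}(1-p)^i p^{m-i}=1-p_m$. Therefore the inequality asserted in~(\ref{maj:rec1}) is equivalent to $p_m<p$: the majority rule with $m\ge 2$ MAVs has strictly smaller error probability than a single MAV. (For $m=1$ the two sides coincide, so I read the statement as concerning a genuine swarm with $m\ge 2$.)

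Next I would recast the goal probabilistically. Set $q:=1-p>1/2$ and let $P_m:=1-p_m=\Pr[X\ge\lceil m/2\rceil]$, where $X\sim\mathrm{Bin}(m,q)$ counts the MAVs that recognize the landmark correctly. The target inequality $p_m<p$ becomes $P_m>q=P_1$ for every $m\ge 2$, i.e.\ pooling MAVs by majority strictly raises the probability of a correct collective decision above the single-MAV competence $q$.

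I would establish $P_m>P_1$ by induction on the swarm size, tracking how the collective verdict changes as MAVs are added. Adding two MAVs to an odd swarm of size $2k+1$, the verdict flips from wrong to right exactly when there were precisely $k$ correct MAVs and both newcomers recognize correctly, and from right to wrong exactly when there were precisely $k+1$ correct MAVs and both newcomers are wrong; since $\binom{2k+1}{k}=\binom{2k+1}{k+1}$, the net change equals $\binom{2k+1}{k}q^{k+1}p^{k+1}(q-p)>0$ because $q>p$. Adding a single MAV to an odd swarm of size $2k-1$ to reach the even size $2k$ can only create new correct verdicts, since the correct count cannot decrease and the threshold $\lceil m/2\rceil$ stays equal to $k$; the net change is $\binom{2k-1}{k-1}q^{k}p^{k}>0$. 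Chaining these two moves reaches every $m\ge 2$ starting from $P_1=q$, and all increments are strictly positive when $0<p<1/2$, so $P_m>q$ for all $m\ge 2$, which is the desired inequality.

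The main obstacle is the combinatorial bookkeeping inside the inductive step: one must pin down exactly which binomial outcomes flip the sign of the majority verdict when MAVs are added, correctly handling the ceiling threshold $\lceil m/2\rceil$ and the tie convention implicit in~(\ref{majorityfmla}) for even $m$. A calculus alternative is to differentiate $P_m$ in $q$ via the telescoping identity $P_m'(q)=m\binom{m-1}{\lceil m/2\rceil-1}q^{\lceil m/2\rceil-1}(1-q)^{m-\lceil m/2\rceil}$ and combine it with the boundary values $P_m(1)=1$ and $P_m(1/2)\ge 1/2$; that route is messier, however, because $P_m'$ is unimodal, so $P_m(q)-q$ need not be monotone, whereas the add-a-MAV induction keeps the sign of each increment manifest.
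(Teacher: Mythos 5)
Your proposal is correct, and it takes a genuinely different route from the paper. The paper works directly with the algebraic form of the right-hand side: setting $a=\frac1p-1>1$, it expands $(a+1)^m=\sum_i\binom{m}{i}a^i$, splits the sum into a lower tail $L$ and an upper tail $U$ (handling odd and even $m$ separately), and shows $aL<U$ by pairing the symmetric terms $\binom{m}{d-i}a^{d-i}$ and $\binom{m}{d+i+1}a^{d+i+1}$ (using $a^{2i+1}$ times the former equals the latter); from $(a+1)^m=L+U<\left(\frac1a+1\right)U$ it then extracts $U>\frac{1-p}{p^m}$. You instead first observe that the right-hand side is exactly $1-p_m=\Pr\left[\mathrm{Bin}(m,1-p)\ge\lceil m/2\rceil\right]$, reduce the claim to $p_m<p$, and prove it by induction on swarm size, computing the exact net change in the majority-correct probability when two MAVs are added to an odd swarm ($\binom{2k+1}{k}q^{k+1}p^{k+1}(q-p)>0$) or one MAV is added to pass from odd to even size ($\binom{2k-1}{k-1}q^kp^k>0$); I checked both increments and the threshold bookkeeping, and they are right. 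Your approach buys a strictly stronger conclusion (strict monotonicity of $1-p_m$ along the chain of swarm sizes, not just $p_m<p$), makes the role of $q>p$ transparent, and explicitly flags the $m=1$ degenerate case where the inequality becomes an equality --- something the paper handles only implicitly by writing $m=2d+1$ or $m=2d$ with $d\ge1$. The paper's argument, in exchange, is purely elementary binomial-coefficient algebra with no probabilistic coupling, though it must treat the two parities in separate, nearly duplicated cases, whereas your induction unifies them as two kinds of steps in one chain.
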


\begin{proof} (Lemma~\ref{lemma:plowerthanonehalf})
The inequality is proved by considering two cases depending on the parity of $m$, the number of \gls*{mavs}.

\medskip

\noindent {\bf Case 1: $m$ is odd.} If $m$ is odd, we can express the value as $m = 2d+1$, for some integer $d \geq 1$ so that $\lceil m/2 \rceil = d+1$. Let $a = \frac 1p -1$ and observe that $a>1$, since $p < \frac 12$.
From the binomial theorem we have that
\begin{align}
(a+1)^{m}
&= \notag \sum_{i =0 }^{m} {m \choose i} a^i \\
&= \notag \sum_{i=0 }^{d}  {m \choose i} a^i  +
\sum_{i = d+1 }^{m} {m \choose i} a^i \\
&= \label{maj:rec2} L+U,
\end{align}
where $L$ and $U$ are defined as follows
\begin{align}
L & :=  \label{compare0:eq}
\sum_{i=0 }^{d}  {m \choose i} a^i = \sum_{i=0 }^{d} {m \choose d-i} a^{d-i},  \mbox{ and }\\
U &:= \label{compare:eq}
\sum_{i = d+1 }^{m} {m \choose i} a^i \ = \sum_{i=0 }^{d} {m \choose d+i+1} a^{d+i+1}.
\end{align}

Now observe that $L$ and $U$ have the same number of summands with identical respective binomial coeficients, namely
$$
{m \choose d-i} a^{d-i} \mbox{ and } {m\choose d+i+1} a^{d+i+1} ,
$$
for $i=0,1,\ldots ,d$. In Formulas~\eqref{compare0:eq}-\eqref{compare:eq} observe that the left term when multiplied by
$a^{2i+1}$ is equal to the right term, namely $a^{2i+1} {m \choose d-i} a^{d-i} =  {m\choose d+i+1} a^{d+i+1}$, for $i=0,1,\ldots ,d$.
Since $a> 1$ and $d \geq 1$ we conclude that
\begin{align}
aL &= \label{compare1:eq}
\sum_{i=0 }^{d} a {m \choose d-i} a^{d-i} < \sum_{i=0 }^{d} a^{2i+1} {m \choose d-i} a^{d-i} = U.
\end{align}
From Equations~\eqref{maj:rec2}~and~\eqref{compare1:eq}, it follows that  
$(a+1)^{m} = L + U < \left(\frac 1a +1\right)U$. 

\noindent Since $a+1 = \frac 1p$, we conclude that
$$
U > \frac{(a+1)^m}{\frac1a + 1} = \frac {1-p}{p^{m}}.
$$
\noindent {\bf Case 2: $m$ is even.} The proof is similar to the case when $m$ is odd.
Since $m$ is even it can be written as $m = 2d$, for some integer $d \geq 1$ so that $\lceil m/2 \rceil = d$. Let $a = \frac 1p -1$ and observe that $a>1$, since $p < \frac 12$.
From the binomial theorem we have that
\begin{align}
(a+1)^{m}
&= \notag \sum_{i =0 }^{m} {m \choose i} a^i \\
&= \notag \sum_{i=0 }^{d-1}  {m \choose i} a^i  +
\sum_{i = d }^{m} {m \choose i} a^i \\
&= \label{majeven:rec2} L'+U',
\end{align}
where $L'$ and $U'$ are defined as follows
\begin{align}
L' & :=  \label{compare0even:eq}
\sum_{i=0 }^{d-1}  {m \choose i} a^i = \sum_{i=1}^{d} {m \choose d-i} a^{d-i},  \mbox{ and }\\
U' &:= \label{compareeven:eq}
\sum_{i = d}^{m} {m \choose i} a^i \ = \sum_{i=0 }^{d} {m \choose d+i} a^{d+i}.
\end{align}

\noindent Now we compare summands in $L'$ and $U'$, namely
$$
{m \choose d-i} a^{d-i} \mbox{ and } {m\choose d+i} a^{d+i} ,
$$
for $i=0,1,\ldots ,d$. In Formulas~\eqref{compare0even:eq}-\eqref{compareeven:eq} observe that the left term when multiplied by
$a^{2i}$ is equal to the right term, namely $a^{2i} {m \choose d-i} a^{d-i} =  {m\choose d+i} a^{d+i}$, for $i=0,1,\ldots ,d$.
Since $a> 1$ and $d \geq 1$ we conclude that
\begin{align}
aL' &\leq \label{compare2:eq}
\sum_{i=0 }^{d} a {m \choose d-i} a^{d-i} < \sum_{i=0 }^{d} a^{2i} {m \choose d-i} a^{d-i} = U'.
\end{align}
From Equations~\eqref{majeven:rec2}~and~\eqref{compare2:eq}, it follows that  $(a+1)^{m} = L + U <
\left(\frac 1a +1\right)U$. Since $a+1 = \frac 1p$, we conclude that
$$
U' > \frac{(a+1)^m}{\frac1a + 1} = \frac {1-p}{p^{m}}.
$$

Therefore, Inequality~\eqref{maj:rec1} is proved in both cases of $m$ odd and $m$ even. Thus, the proof of Lemma~\ref{lemma:plowerthanonehalf} is complete.
\end{proof}

\noindent We may now conclude the following.
\begin{thm}
\label{maj:thm1}
The majority rule applied to a swarm of $m$ \gls*{mavs} executing  Algorithm~\ref{alg:rec} reduces the probability of error of the recognition process as long as $p$ is less than $1/2$.
\end{thm}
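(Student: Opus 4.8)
The plan is to derive the theorem as an almost immediate corollary of Lemma~\ref{lemma:plowerthanonehalf}, which already carries the combinatorial weight. ``Reducing the probability of error'' means establishing the strict inequality $p_m < p$, where $p_m$ is the majority error probability from Equation~\eqref{majorityfmla}. So the whole task is to translate the statement $p_m < p$ into the inequality that the lemma supplies.

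First I would unfold $p_m$ and rewrite the goal. Substituting the formula for $p_m$, the inequality $p_m < p$ is equivalent to
\begin{align*}
1 - p &< \sum_{i=\lceil m/2 \rceil}^{m} \binom{m}{i} (1-p)^i p^{m-i}.
\end{align*}
Next I would factor a common $p^m$ out of every summand, using the termwise identity $(1-p)^i p^{m-i} = p^m \left(\tfrac{1}{p} - 1\right)^i$, valid in the meaningful regime $0 < p < 1/2$ where $p \neq 0$. This turns the right-hand side into $p^m \sum_{i=\lceil m/2 \rceil}^{m} \binom{m}{i} \left(\tfrac{1}{p} - 1\right)^i$, so the goal becomes exactly Inequality~\eqref{maj:rec1}. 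Since Lemma~\ref{lemma:plowerthanonehalf} asserts precisely this inequality for $p < 1/2$, invoking it yields $p_m < p$ and finishes the proof.

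The main obstacle has, in effect, already been absorbed into the lemma, whose parity-split binomial comparison (establishing $aL < U$ in the odd case and $aL' < U'$ in the even case) is the real engine. At the theorem level the only points to verify are that the factorization holds termwise and that the algebraic rearrangement preserves the strict inequality; both are routine. I would therefore anticipate no genuine difficulty here beyond bookkeeping, since the theorem is the clean payoff of the lemma rather than an independent argument. A worthwhile aside is the degenerate boundary $p = 0$, where $p_m = p = 0$ and there is simply nothing to reduce; the strict improvement is a feature of the open interval $0 < p < 1/2$.
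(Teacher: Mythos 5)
Your proposal is correct and follows essentially the same route as the paper: it rewrites $1-p_m$ via Equation~\eqref{maj:rec3} by factoring $p^m$ out of the tail of the binomial sum, and then invokes Lemma~\ref{lemma:plowerthanonehalf} to conclude $1-p < 1-p_m$, i.e., $p_m < p$. The remark about the degenerate boundary $p=0$ is a harmless addition not present in the paper.
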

\begin{proof}
Let $m$ be the number of \gls*{mavs}.
Therefore $1 - p_m$ is the probability that the majority is at least
composed of $\lceil m/2 \rceil$ \gls*{mavs} correctly performing recognition, i.e.,
\begin{align}
1 - p_m &= \notag
\sum_{i= \lceil m/2 \rceil}^{m} {m \choose i} (1-p)^i p^{m-i} \\
&= \label{maj:rec3}  p^{m} \sum_{i=\lceil m/2 \rceil}^{m} {m\choose i} \left(\frac1p -1 \right)^i  .
\end{align}
Now, for $p < 1/2$ Lemma~\ref{lemma:plowerthanonehalf} says that
\begin{align}
1 - p &<  \label{maj:rec4}
p^{m} \sum_{i=\lceil m/2 \rceil}^{m}  {m \choose i} \left(\frac1p -1 \right)^i ,
\end{align}
which in view of Equation~\eqref{maj:rec3} implies that $p_m < p$, i.e., the probability of error for a swarm of $m$ \gls*{mavs} is less than
for \gls*{mav} in solo. This proves the theorem.
\end{proof}

\noindent A similar proof also yields the following.
\begin{thm}
\label{maj:thm2}
The majority rule applied to a swarm of $m$
\gls*{mavs} executing Algorithm~\ref{alg:adv} reduces the probability
of error of the advice process as long as $q < 1/2$,
\end{thm}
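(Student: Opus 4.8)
The plan is to mirror the argument already carried out for Theorem~\ref{maj:thm1}, substituting the advice error probability $q$ for the recognition error probability $p$ throughout. First I would introduce a quantity $q_m$, the error probability of the majority rule applied in Algorithm~\ref{alg:adv}, in exact analogy with the quantity $p_m$ defined in Equation~\eqref{majorityfmla}. Since advice errors are assumed independent and identically distributed across the \gls*{mavs}, the event that the swarm's majority advice interpretation is valid is precisely the event that at least $\lceil m/2 \rceil$ of the $m$ \gls*{mavs} interpret the advice correctly, each independently with probability $1-q$. Hence
\begin{align*}
1 - q_m &= \sum_{i= \lceil m/2 \rceil}^{m} {m \choose i} (1-q)^i q^{m-i} = q^{m} \sum_{i=\lceil m/2 \rceil}^{m} {m\choose i} \left(\frac1q -1 \right)^i,
\end{align*}
which is the verbatim counterpart of Equation~\eqref{maj:rec3}.

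The key observation is that Lemma~\ref{lemma:plowerthanonehalf} is a purely arithmetic inequality about an arbitrary real number in the open interval $(0,1/2)$; the symbol $p$ there plays no role beyond being such a number. Consequently, I would apply the lemma with $q$ in place of $p$ (legitimate precisely because $q < 1/2$ by hypothesis) to obtain
\begin{align*}
1 - q &< q^{m} \sum_{i= \lceil m/2 \rceil }^{m}  {m \choose i} \left(\frac1q -1 \right)^i.
\end{align*}
Comparing this with the expression for $1 - q_m$ displayed above yields $1 - q < 1 - q_m$, hence $q_m < q$. This is exactly the assertion that the majority rule reduces the advice error probability, completing the proof.

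I do not anticipate a genuine obstacle, since both the combinatorial rewriting and the governing inequality have already been established in full generality by Theorem~\ref{maj:thm1} and Lemma~\ref{lemma:plowerthanonehalf}. The only point that warrants care is to state explicitly that recognition and advice are governed by the \emph{same} mathematical model, namely an independent Bernoulli majority vote over $m$ participants, so that reusing Lemma~\ref{lemma:plowerthanonehalf} with the substitution $p \mapsto q$ is justified rather than merely asserted by appeal to ``similarity.'' Once that correspondence is made explicit, the conclusion $q_m < q$ is immediate.
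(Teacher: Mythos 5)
Your proposal is correct and is precisely the argument the paper intends: the paper's own proof is a one-line appeal to similarity with Theorem~\ref{maj:thm1} (in fact containing a self-referential typo), and you have simply written out that same substitution $p \mapsto q$ in full, correctly noting that Lemma~\ref{lemma:plowerthanonehalf} is a purely arithmetic fact applicable to any parameter below $1/2$. No gaps; your version is more explicit than the paper's.
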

\begin{proof}
The proof is similar to the proof of Theorem~\ref{maj:thm2}.
\end{proof}

Note that there are additional possibilities in
Algotithm~\ref{alg:adv}. The \gls*{mavs} in a swarm could also acquire information
either from the same landmark or from different landmarks (although we
do not investigate the latter case further).

\subsection{Approximating the majority}
\label{sec:approximating}

Let $S_m$ be the sum of $m$ mutually independent random variables each taking the value $1$ with probability $p$ and the value $0$ with probability $1-p$ (i.e., Bernoulli random trials). The majority probability discussed above is given by the formula $\Pr [S_m \geq \lceil \frac m2 \rceil]$. Good approximations of the majority probability for large values of $m$ can be obtained from the central limit theorem which states that
\begin{equation}
\label{centralimit:thm}
\footnotesize{
\Pr \left[ a \leq \frac{S_m -mp}{\sqrt{mp(1-p)}} \leq b \right] \to \frac1{\sqrt{2\pi}} \int_a^b e^{-x^2 /2} dx.
\mbox{ as $m \to \infty$}
}
\end{equation}
(see e.g.,~\cite{rozanov2013probability}).
For example, for any $m$ we have that
$$
S_m \geq \left\lceil \frac m2 \right\rceil  \Leftrightarrow
\frac{S_m -mp}{\sqrt{mp(1-p)}} \geq \frac{\left\lceil \frac m2 \right\rceil -mp}{\sqrt{mp(1-p)}} .
$$
Hence, the central limit theorem~\eqref{centralimit:thm} is applicable with $a = \frac{\left\lceil \frac m2 \right\rceil -mp}{\sqrt{mp(1-p)}} $ and $b = +\infty$, where $p<1/2$ is a constant..

\section{Experiments and Simulations}
\label{sec:sim}

\noindent There is an interesting tradeoff between the majority probability $p_m$ and
cost of using a swarm of $m$ \gls*{mavs}. This helps put the
probabilistic gains in context w.r.t. the energy consumption and time
costs of the swarm.

\subsection{Cost measures and tradeoffs}
\label{sec:cost}

From Theorem~\ref{maj:thm1}, we know that for any  number $m$ of \gls*{mavs} the error probability is reduced from $p$ to $p_m$, similarly for Theorem~\ref{maj:thm2}. We now examine quantitative estimates of this error reduction in relation to specific numbers of \gls*{mavs} employed.

From Equation~\eqref{maj:rec3}, observe that we can derive the following identity expressing the ratio of improvement of the probability of correctness:
\begin{align}
\frac{1-p_m}{1-p}
&= \label{majopt:rec}
\sum_{i = \lceil m/2 \rceil }^{m}  {m \choose i} (1-p)^{i-1} p^{m-i}
\end{align}
In a way, one can think of the right-hand side of
Equation~\eqref{majopt:rec} as the ``fractional gain'' in the
correctness probability (because we are employing a majority rule)
that improves from $1-p$ to $1-p_m$. In general, we would like on
the one hand to ensure that $\frac{1-p_m}{1-p} > 1$ and on the other
hand optimize the right-hand side of Equation~\eqref{majopt:rec}. Since we are also interested in applying the majority algorithms for a relatively small
number of \gls*{mavs}, we give precise estimates for $m =2,3,4,5,6$ and $7$.

\begin{thm}
Table~\ref{tab:fractionalgainvalues} shows values for a fractional gain of $\frac{1-p_m}{1-p}$, for $m$ equal to 2, 3, 4, 5, 6 and 7~\gls*{mavs}.

\medskip
\begin{table}
\caption{Fractional gain values.}\label{tab:fractionalgainvalues}
\centering
\includegraphics[width=12cm]{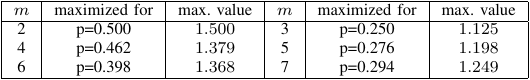}
\end{table}

\end{thm}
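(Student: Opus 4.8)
The plan is to evaluate the closed-form expression for the fractional gain in Equation~\eqref{majopt:rec} at each of the six swarm sizes $m=2,3,4,5,6,7$. Because the right-hand side is already an explicit finite sum, the argument is purely computational: for each $m$ I would substitute the value, expand the binomial coefficients together with the powers of $p$ and $1-p$, and collect terms into the polynomial expression recorded in Table~\ref{tab:fractionalgainvalues}. No limiting arguments or inequalities are needed at this stage; Lemma~\ref{lemma:plowerthanonehalf} and Theorem~\ref{maj:thm1} already guarantee that each resulting value exceeds $1$ when $p<1/2$, so the present statement is merely the tabulation of those gains.

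The key steps, carried out separately for each $m$, are as follows. First I would fix the lower summation index $\lceil m/2\rceil$, which equals $1,2,2,3,3,4$ for $m=2,\dots,7$ respectively. Next I would write out the terms ${m \choose i}(1-p)^{i-1}p^{m-i}$ from $i=\lceil m/2\rceil$ up to $i=m$; note that for the top term $i=m$ the factor $p^{m-i}$ collapses to $p^{0}=1$ while the power of $1-p$ becomes $(1-p)^{m-1}$. Finally I would simplify, factoring out common powers of $1-p$ where convenient. For instance, for $m=2$ the sum is $2p+(1-p)=1+p$, and for $m=3$ it is $3p(1-p)+(1-p)^2=(1-p)(1+2p)$; the even case $m=4$ yields $6p^{2}(1-p)+4p(1-p)^{2}+(1-p)^{3}$, and the remaining sizes are handled identically.

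The main obstacle is not conceptual but purely a matter of algebraic bookkeeping: for the larger sizes $m=5,6,7$ the sum contains several terms, each a product of a binomial coefficient with mixed powers of $p$ and $1-p$, and care must be taken to combine them into the compact form shown in the table without arithmetic slips. I would mitigate this by keeping subexpressions in factored form $(1-p)^{j}$ as long as possible, and by verifying each entry through the consistency check at $p=0$, where only the $i=m$ term survives and the gain must equal $1$. This completes the verification underlying Table~\ref{tab:fractionalgainvalues}.
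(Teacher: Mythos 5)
Your plan for deriving the polynomial forms of the gain is exactly what the paper does, and your sample computations are correct: with lower index $\lceil m/2\rceil = 1,2,2,3,3,4$ you get $1+p$ for $m=2$, $3p(1-p)+(1-p)^2 = 1+p-2p^2$ for $m=3$, and $6p^2(1-p)+4p(1-p)^2+(1-p)^3=(1-p)(3p^2+2p+1)$ for $m=4$, all of which agree with the paper's expressions. The consistency check at $p=0$ is a sensible safeguard.

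However, your proposal stops short of what the paper's proof (and hence the table) actually contains. For each $m$ the paper does not merely tabulate the polynomial $\frac{1-p_m}{1-p}$; it also determines the value of $p$ at which this gain is maximized and the corresponding maximum value. That step requires differentiating each polynomial in $p$ and locating the relevant critical point: trivial for $m=2$ (maximum $1.5$ at $p=1/2$) and $m=3$ (maximum $1.125$ at $p=1/4$), but for $m=4$ it yields $p=\frac{1+\sqrt{10}}{9}$, for $m=5$ one must factor the cubic derivative $24p^3-27p^2+2p+1=(p-1)(24p^2-3p+1)$ and take the positive root $p=\frac{3+\sqrt{105}}{48}\approx 0.276$, and $m=6,7$ are handled similarly. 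Your statement that ``no limiting arguments or inequalities are needed'' and that the task is ``merely the tabulation'' of the gains misses this optimization component entirely, so as written the proposal establishes only half of the table's entries. You would need to add, for each $m$, the computation of the maximizing $p$ and the maximum gain.
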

%
%
%
\begin{proof}
For $m = 2$ \gls*{mavs}, we can show that the ratio
  $\frac{1-p_2}{1-p} = \frac{1-p^2}{1-p} = 1+p$ is maximized for $p=1/2$ . 
Its maximum value  is $1+1/2 = 1.5$.

For $m = 3$, the ratio
  $\frac{1-p_3}{1-p}$ is maximized for $p=1/4$.
 Its maximum value
  is $1+1/8 = 1.125$. Indeed, calculations show that for $m=3$ the
  righthand side of Equation~\eqref{majopt:rec} is equal to $1 +p
  -2p^2$. Calculations also show that $1 +p -2p^2 $ is maximized when
  $p=1/4$ and attains the maximum value $1+1/8$. Hence also
  $\frac{1-p_3}{1-p}$ is maximized when $p=1/4$ and attains the
  maximum value $1+1/8 = 1.125$.

For $m = 4$, we have $\frac{1-p_4}{1-p} = (1-p) (3p^2+2p+1)$,
maximized for $p = \frac{1+\sqrt{10}}9$

For $m=5$, $\frac{1-p_5}{1-p} =
  10(1-p)^2p^2 + 5(1-p)^3p +(1-p)^4$. The derivative of the righthand
  side with respect to $p$ is equal to $24 p^3 -27 p^2 +2p +1$. One of
  the roots of this polynomial is $p=1$ and therefore $24 p^3 -27 p^2
  +2p +1 = (p-1)(24p^2 -3p+1)$. The positive root of the quadratic
  $24p^2 -3p+1$ is equal to $p= \frac{3+\sqrt{9+96}}{48} =
  \frac{3+\sqrt{105}}{48} \approx 0.275978$ and attains the maximum
  value $1.1917$.

For $m = 6$, $\frac{1-p_6}{1-p} = (1-p)^2 (20p^3+15(1-p)p^2+6(1-p)^2)p+(1-p)^3)$. This is maximized for $1.368$.

For $m=7$, $\frac{1-p_7}{1-p} =
  (1-p)^3 (35 p^3+35 (1-p)p^2+7 (1-p)^2 \cdot p+(1-p)^3).$ The
  derivative of the righthand side above is $(1-p)^2 (-42 \cdot
  p^3-102 \cdot p^2+32 \cdot p+1)$ which yields the root $p \approx 0.294$
  and attains the maximum value $1.249$.
\end{proof}

\begin{table}[!hptb]
  \caption{Left to right columns provide (1) the number of \gls*{mavs},
   (2) the error probability ($p$), (3) the fractional gain $\frac{1-p_m}{1-p}$
from $m$ = 2 to $m$ = 7, and (4) the corresponding majority error.}
  \label{gain1:tbl}
  \centering
  \includegraphics[width=12cm]{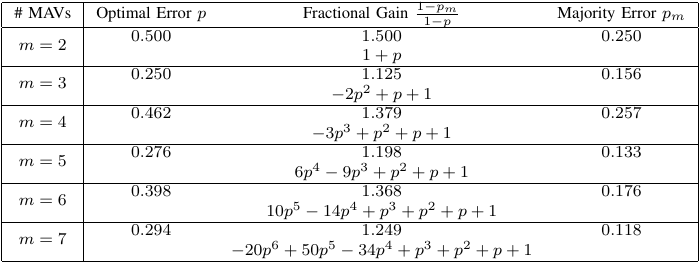}
\end{table}%

\begin{table}[!t]
  \centering
  \caption{Left to right columns provide (1) the number of \gls*{mavs},
   (2) the error probability ($p$), (3) the fractional gain $\frac{1-p_m}{1-p}$ from m = 9 to m = 21, and (4) the corresponding majority error.}
  \label{gain2:tbl}
  \includegraphics[width=12cm]{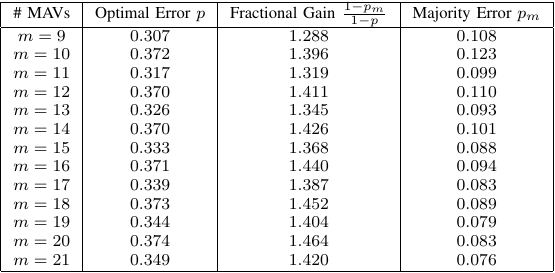}
\end{table}%

Table~\ref{gain1:tbl} displays the polynomials modeling the fractional gains for $m=2, 3, \ldots  7$. The improvement provided in
Theorem~\ref{maj:thm1} is more substantial when the number $m$
of \gls*{mavs} gets larger. This is also confirmed by the calculations above.
Table~\ref{gain2:tbl} displays the optimal error
probability and fractional gain and the last column the majority error
probability $p_m$ for a given number $m$ of \gls*{mavs}, where $m \leq 21$.
Figure~\ref{fig:table_m} (a) plots the evaluation of equation
$\frac{1-p_m}{1-p}$ from $m = 2$ to $m = 20$ and Figure~\ref{fig:table_m} (b) from $m = 3$ to $m = 21$. The resulting curve
indicates the maximum value of $\frac{1-p_m}{1-p}$ for $p$.

\begin{figure}[!t]
    \centering
    \subfigure[Even numbers of \gls*{mavs}]{
      \includegraphics[width=6.5cm]{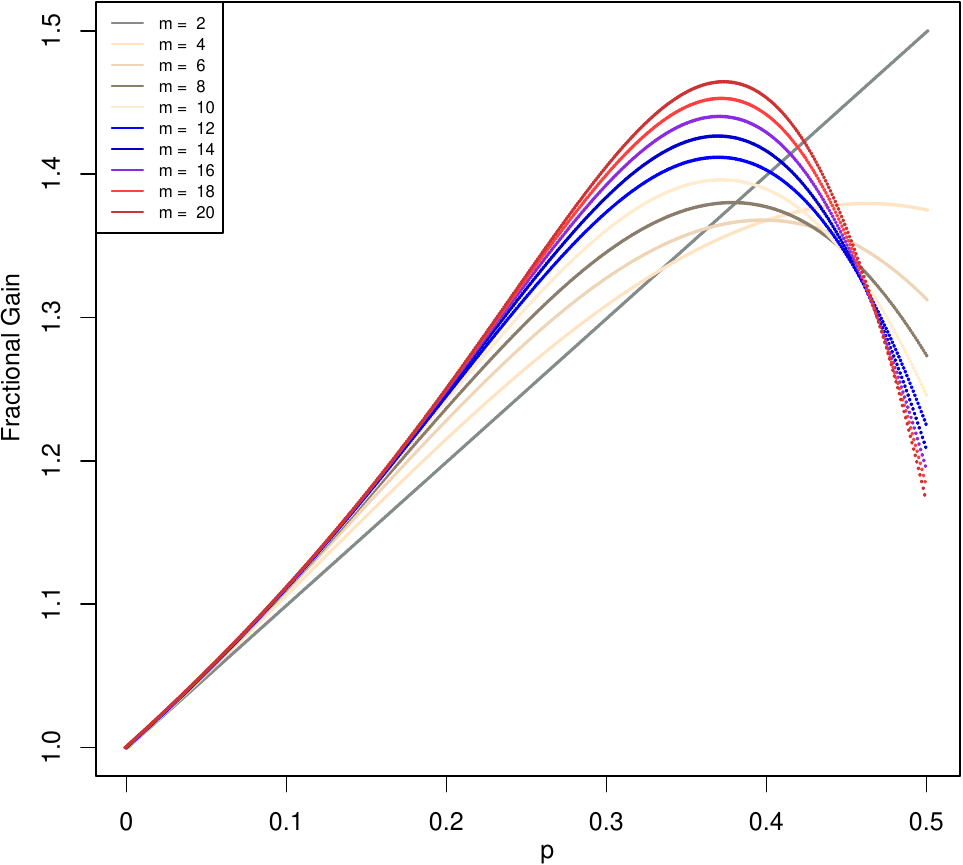}
    }
    \subfigure[Odd number of \gls*{mavs}]{
      \includegraphics[width=6.5cm]{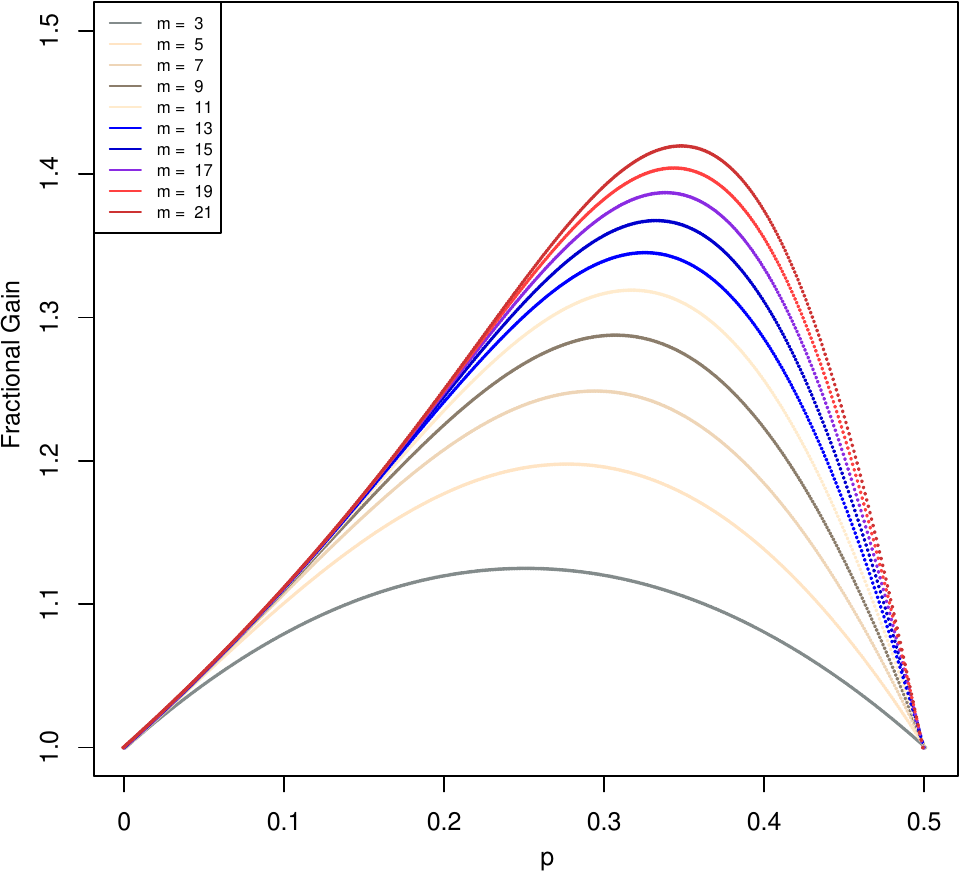}
    }
   \caption{Plots of the fractional gain function $\frac{1-p_m}{1-p}$ for varied $p$'s.}
\label{fig:table_m}
\end{figure}

\subsection{Numerical Simulations}
\label{sec:simBack}


\noindent Algorithms~\ref{alg:rec} and~\ref{alg:adv} have been integrated
into a Java simulator, which implements swarm populations modeled as mobile
agents. Each swarm executes the algorithms within a terrain
of interconnected landmarks. It consists of a
simple discrete event, time-step based simulation engine, in which
the swarm executes our algorithms at every step
of simulated time. The simulation engine implements a discrete event
scheduler, a graphical view, a data collection system, and the
simulated objects themselves, i.e., landmarks and agents. Videocaptures and
 source code are available online, at \href{http://j.mp/mavsim}{http://j.mp/mavsim} and \href{https://github.com/jgalfaro/mirrored-scavesim/tree/master/mavsim}{GitHub}.

Using our Java simulation, we validate five different scenarios. Each scenario 
relates the number of \gls*{mavs} with the error probability of the majority 
rule varying the {\em recognition} and {\em advice} error ratios between 70\%, 80\%, and 90\% (cf. Section~\ref{sec:algo}). Figures~\ref{fig:scenarios}(a,b) represent two traditional grid structures of MAVSIM (i.e., a $10\times 10$-grid and a $25\times 25$-grid). Figures~\ref{fig:scenarios}(c,d,e) represent three additional structures exported using the \href{https://www.openstreetmap.org}{OpenStreetMap} online service. More precisely, Figure~\ref{fig:scenarios}(a) shows a $10\times 10$-grid structure of MAVSIM; Figure~\ref{fig:scenarios}(b) a $25\times25$-grid structure; Figure~\ref{fig:scenarios}(c) a topological structure exported from \emph{OpenStreetMap} using \emph{Carleton University} as location; Figure~\ref{fig:scenarios}(d) a topological structure exported from \emph{OpenStreetMap} using Telecom SudParis (at the NanoInnov center of the campus of the Institut Polytechnique de Paris (IPP) and the Paris-Saclay University) as location; Figure~\ref{fig:scenarios}(e) a topological structure exported from \emph{OpenStreetMap} using University of Campinas  as location.

\begin{figure}[!t]
\centering
\subfigure[$10\times10$-grid]{
\includegraphics[width=0.3\linewidth]{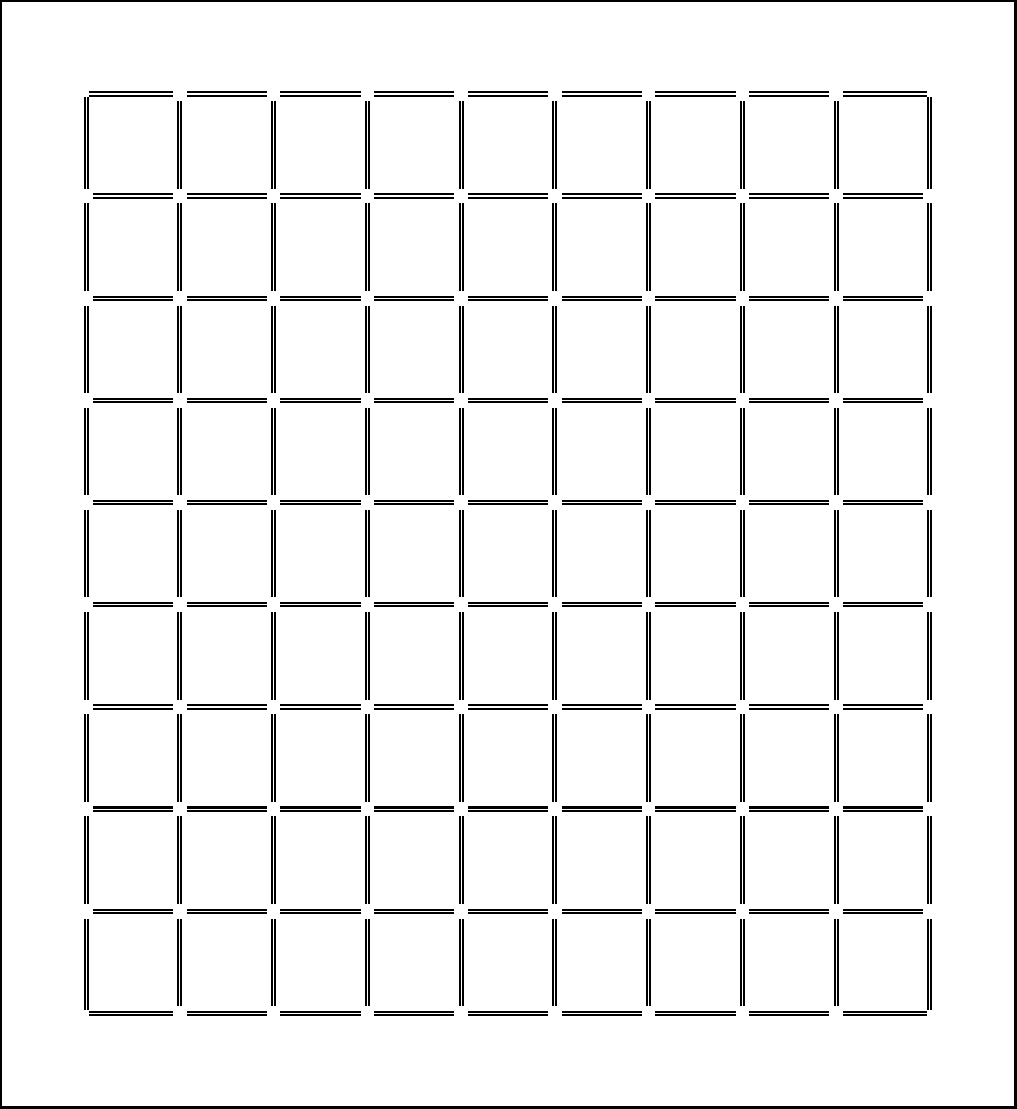}
}
\subfigure[$25\times25$-grid]{
  \includegraphics[width=0.3\linewidth]{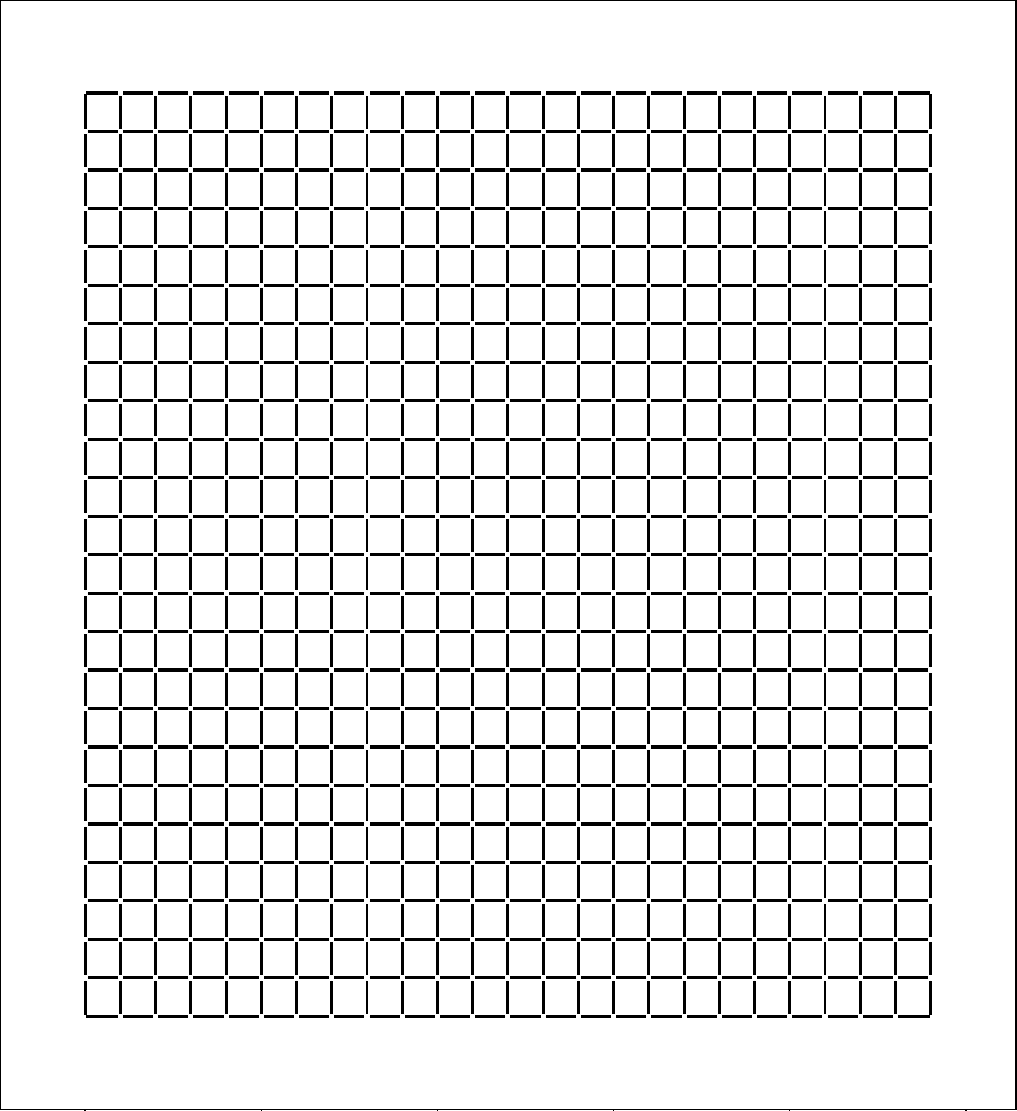}
}\\
\subfigure[Carleton University]{
  \includegraphics[width=0.27\linewidth]{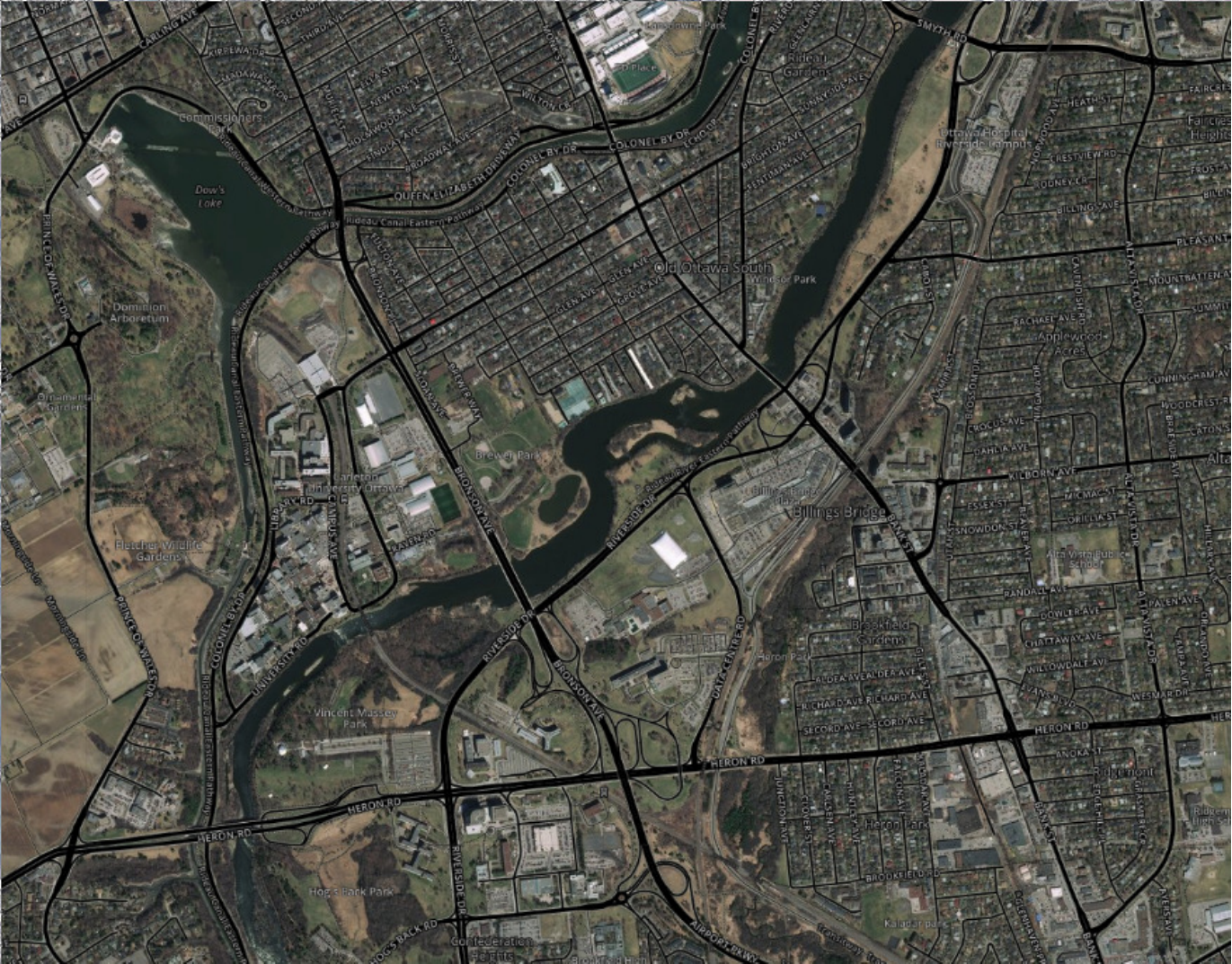}
}
\subfigure[IPP -- T. SudParis]{
  \includegraphics[width=0.31\linewidth]{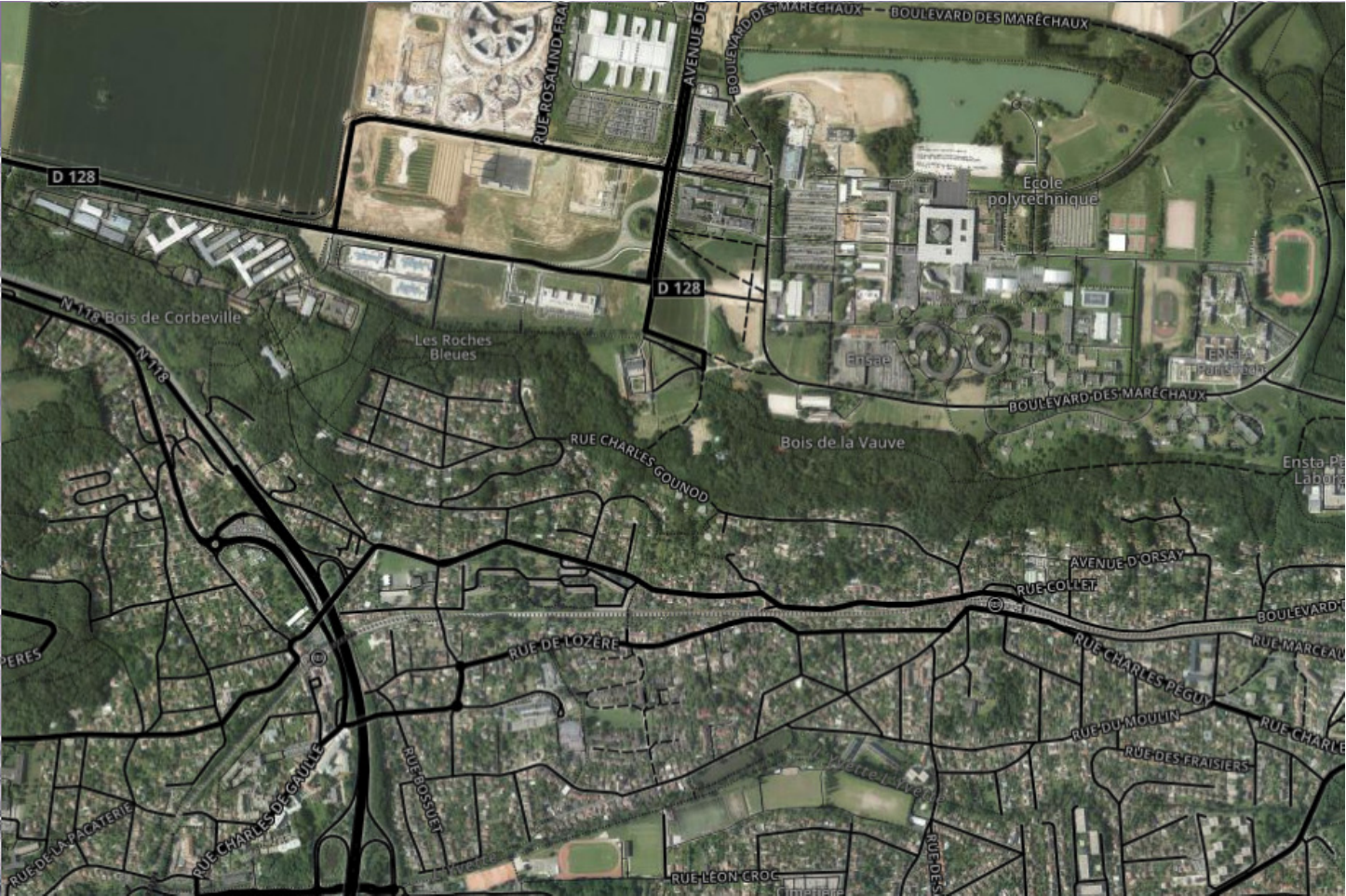}
}
\subfigure[Campinas -- Unicamp]{
  \includegraphics[width=0.31\linewidth]{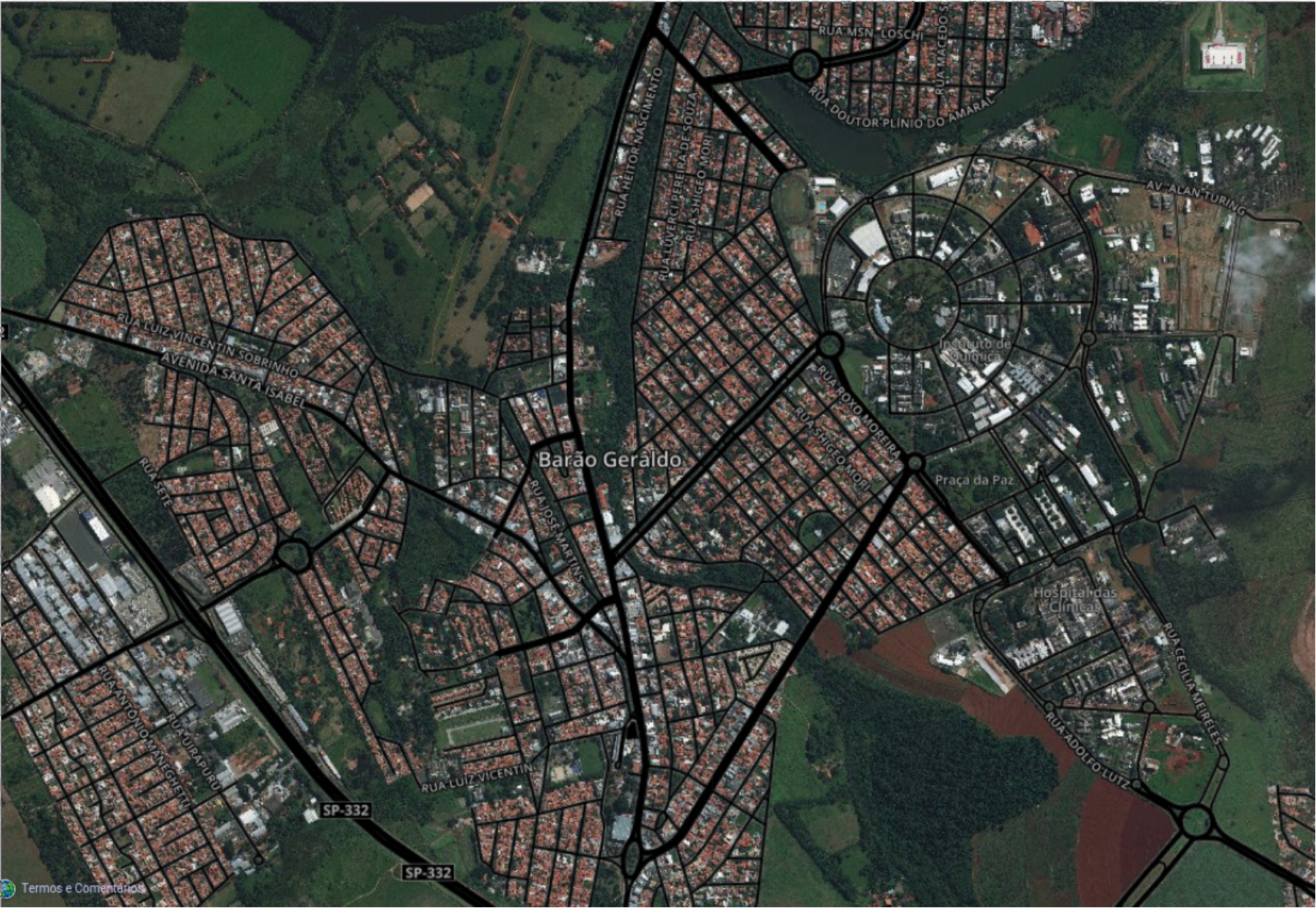}
}
\subfigure[Carleton University Map]{
  \includegraphics[width=0.27\linewidth]{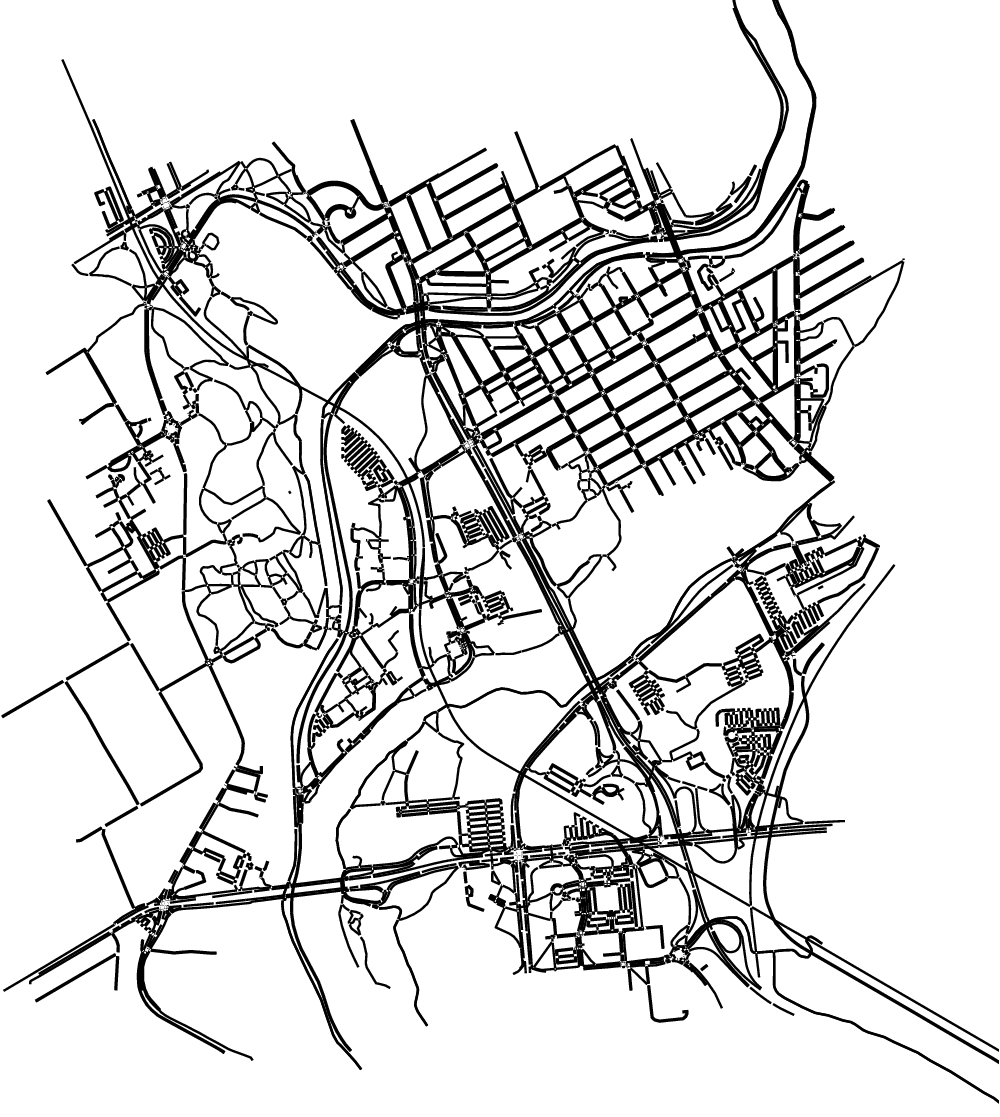}
}
\subfigure[IPP -- T. SudParis Map]{
  \includegraphics[width=0.31\linewidth]{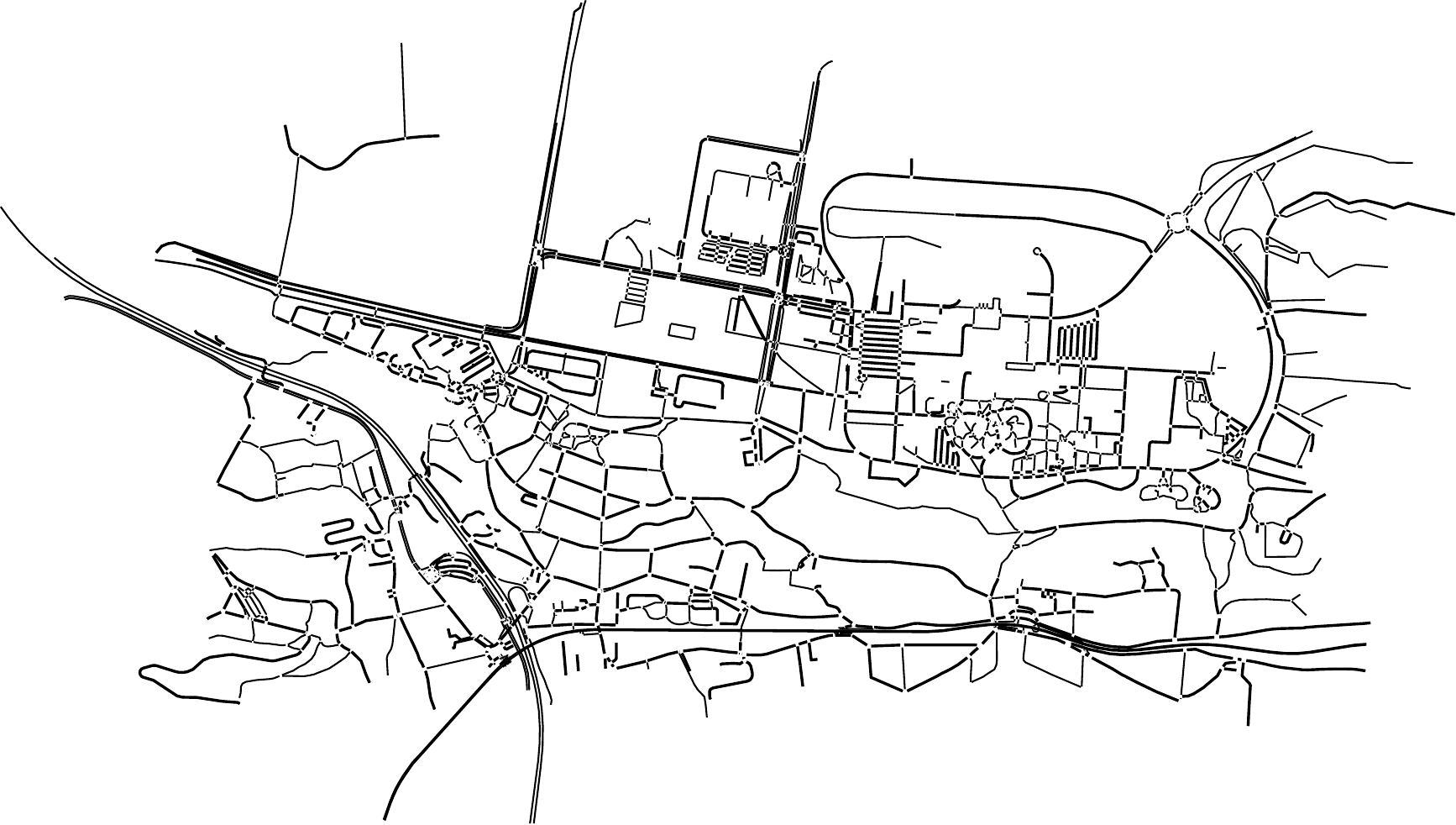}
}
\subfigure[Campinas -- Unicamp Map]{
  \includegraphics[width=0.3\linewidth]{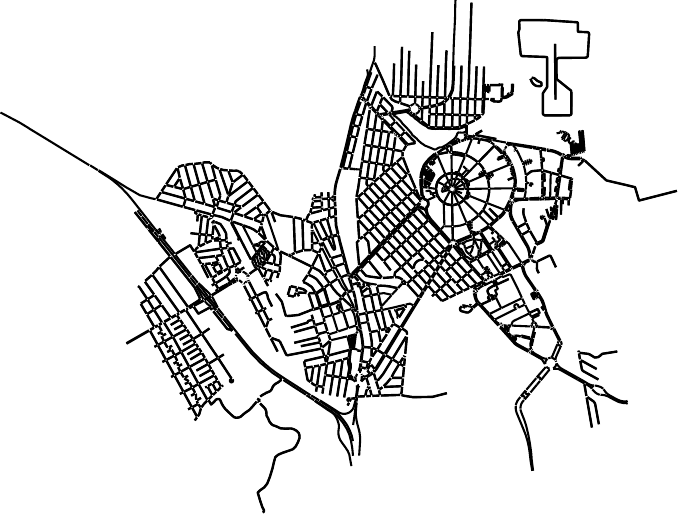}
}
\caption{Sample MAVSIM scenarios for the experimental results.}\label{fig:scenarios}
\end{figure}

\begin{figure*}[]%
 \centering
 \subfigure[$10\times 10$-grid]{\includegraphics[width=0.25\linewidth]{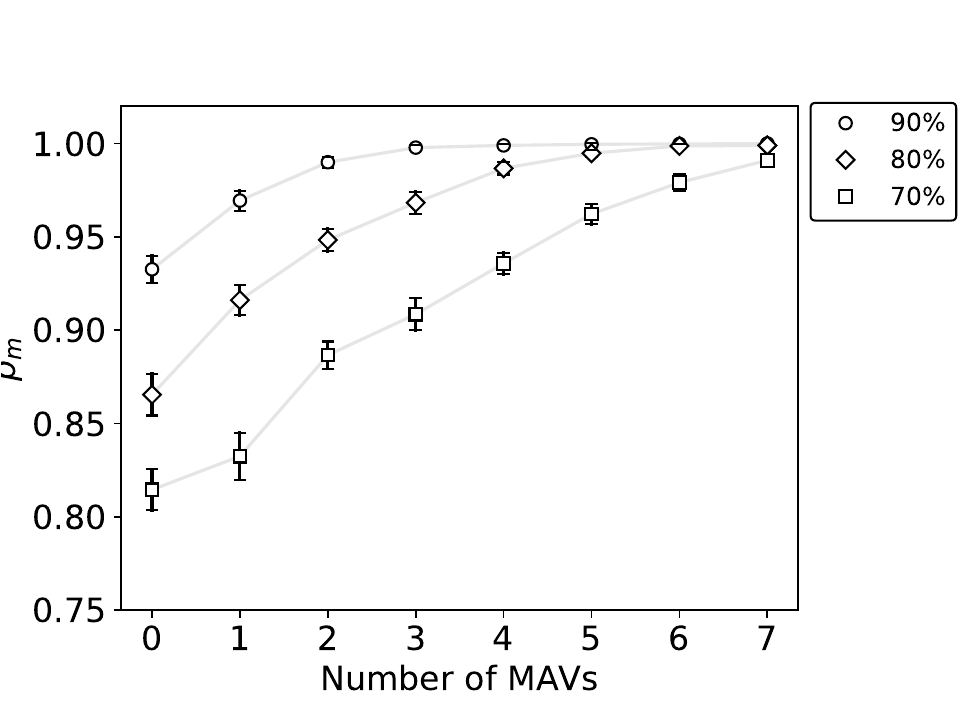}\label{fig:grid10}}
 \subfigure[$25\times 25$-grid]{\includegraphics[width=0.25\linewidth]{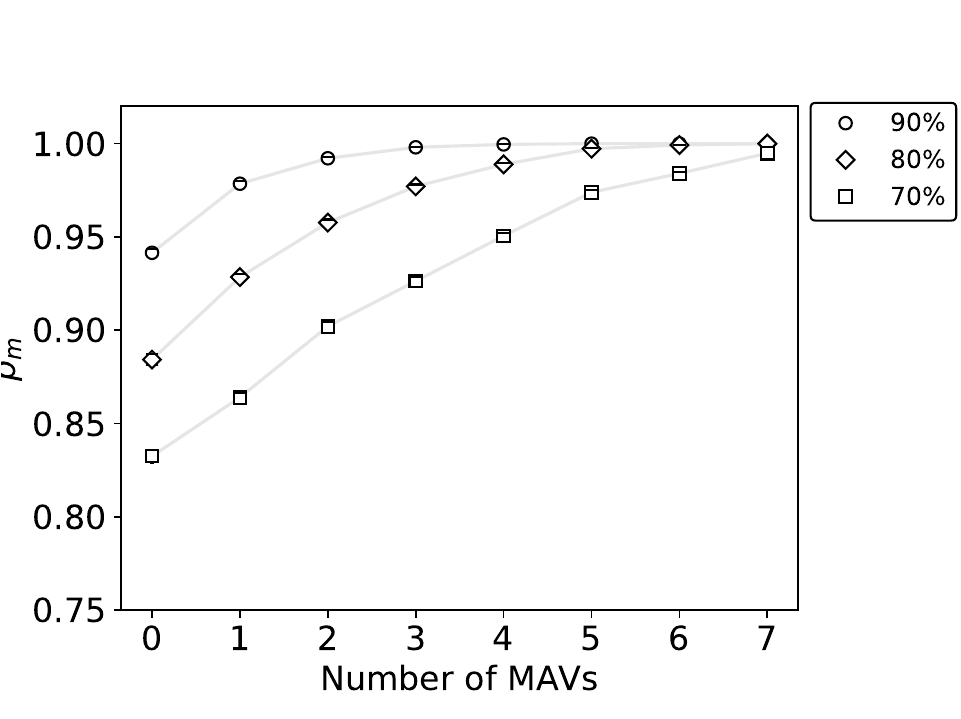}\label{fig:grid25}}
 \subfigure[Carleton University]{\includegraphics[width=0.25\linewidth]{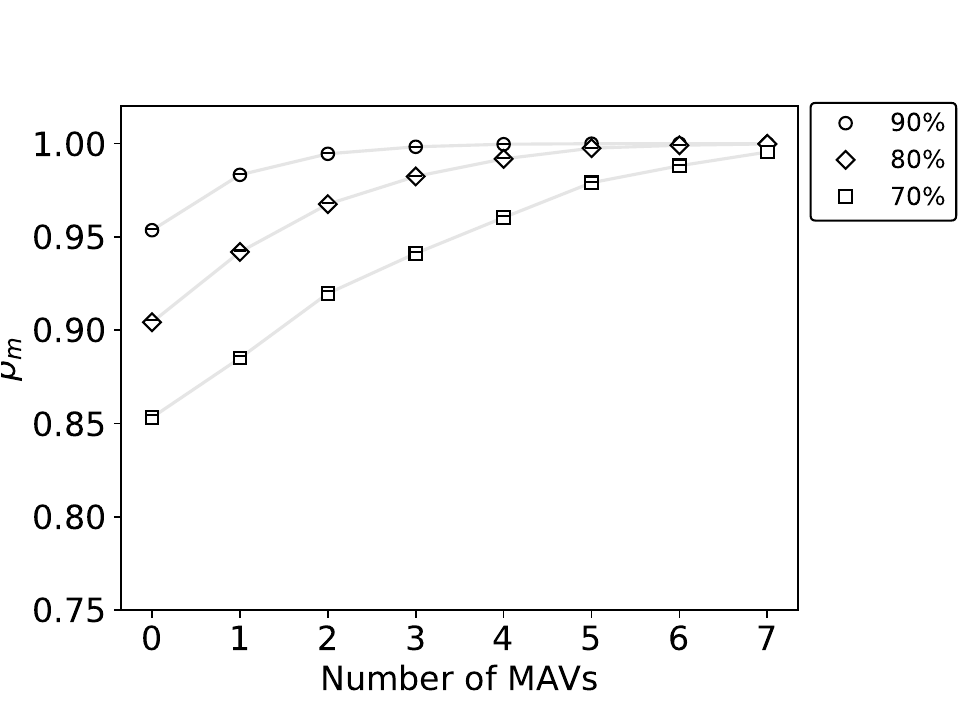}\label{fig:carleton}}\\
 \subfigure[IPP -- T. SudParis]{\includegraphics[width=0.25\linewidth]{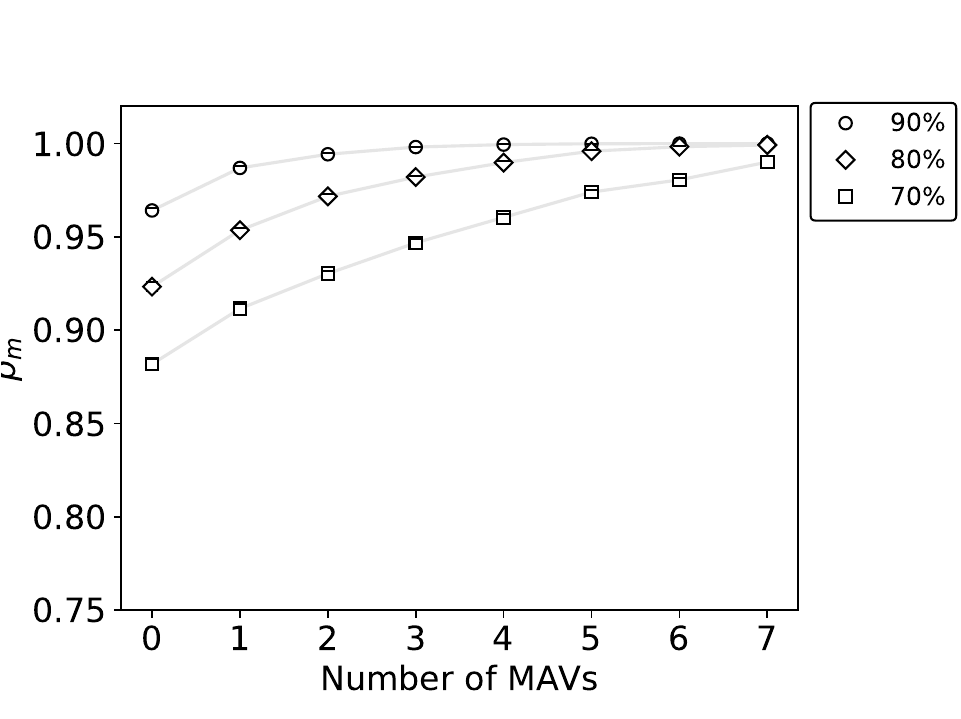}\label{fig:paris}}
 \subfigure[Campinas -- Unicamp]{\includegraphics[width=0.25\linewidth]{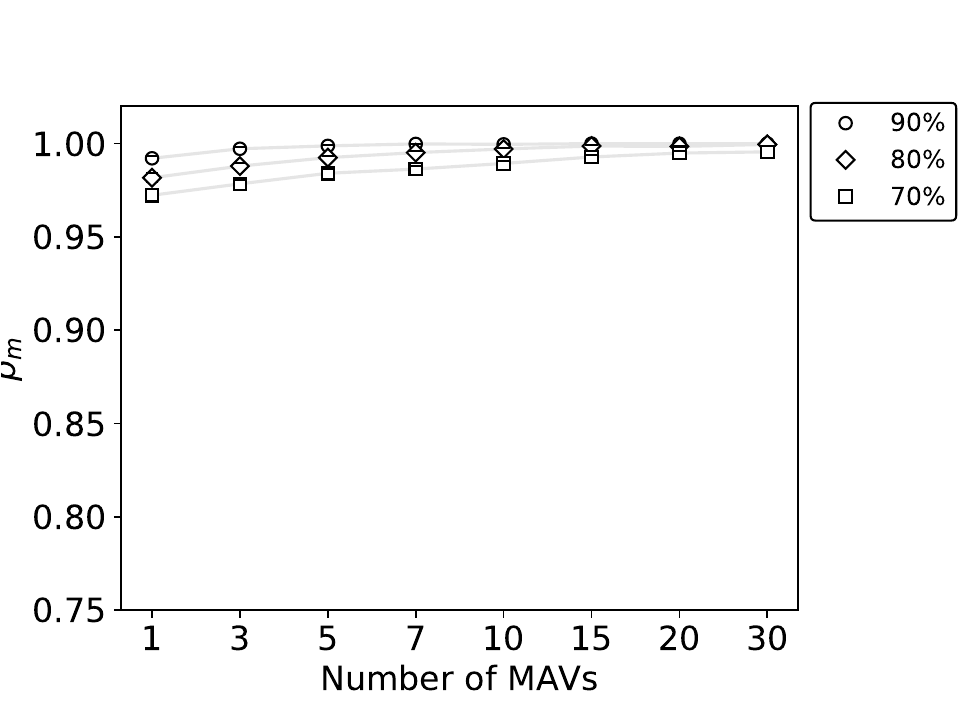}\label{fig:unicamp}}
 \caption{Relation between the number of \gls*{mavs} and error probability of the majority rule varying the {\em recognition} and {\em advice} error ratios between 70\% (squares), 80\% (diamonds), and 90\% (circles). The vertical axis represents the error probability. The horizontal axis represents the number of \gls*{mavs}.}%
 \label{fig:results}%
\end{figure*}

\begin{figure*}[]%
 \centering
  \subfigure[$10\times 10$-grid]{\includegraphics[width=0.25\linewidth]{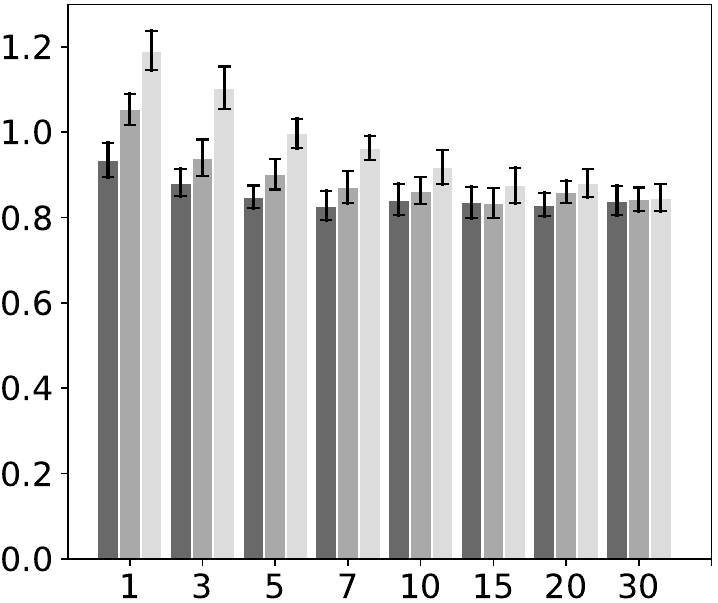}\label{fig:grid10-consumption}}
 \subfigure[$25\times 25$-grid]{\includegraphics[width=0.255\linewidth]{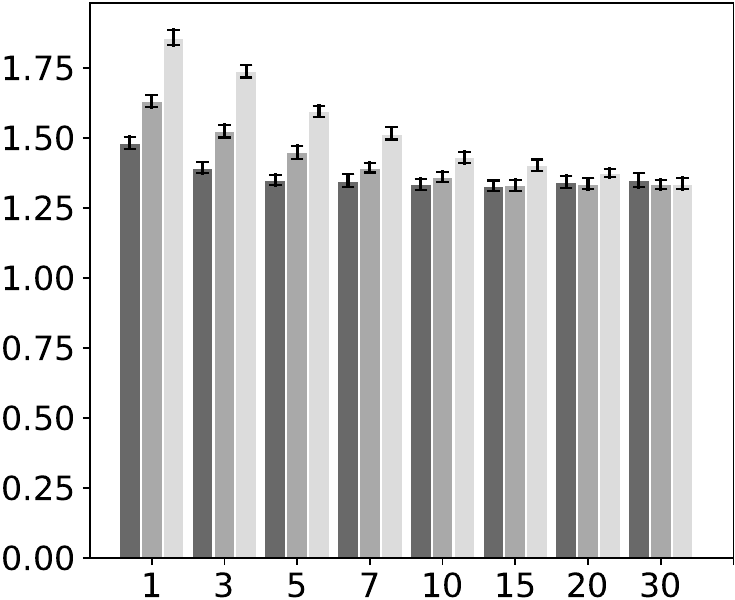}\label{fig:grid25-consumption}}
 \subfigure[Carleton University]{\includegraphics[width=0.25\linewidth]{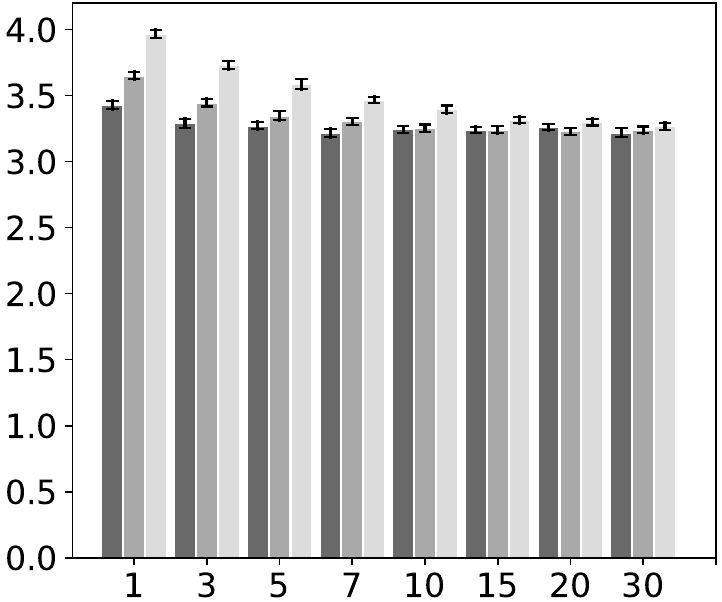}\label{fig:carleton-consumption}}\\
 \subfigure[IPP -- T. SudParis]{\includegraphics[width=0.25\linewidth]{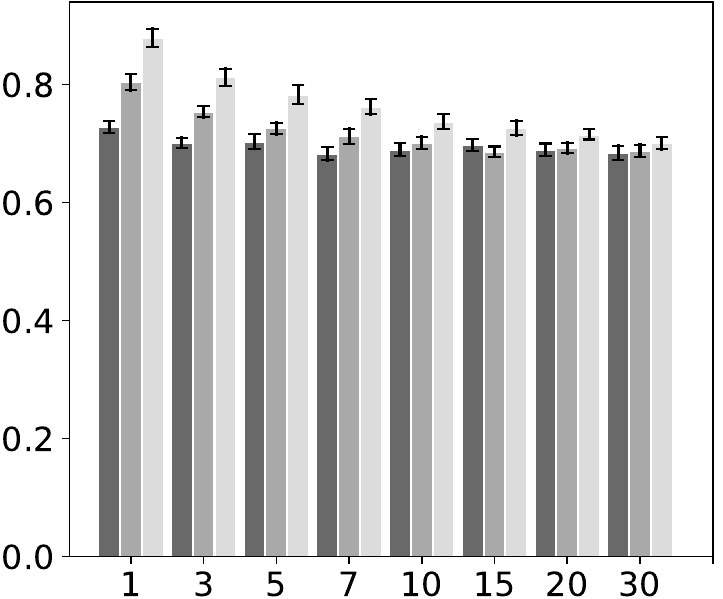}\label{fig:paris-consumption}}
 \subfigure[Campinas -- Unicamp]{\includegraphics[width=0.25\linewidth]{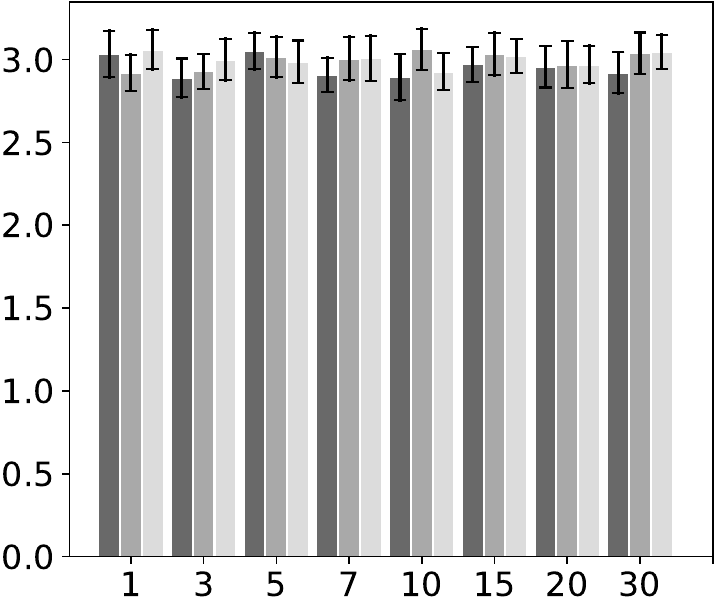}\label{fig:unicamp-consumption}}
 \caption{Relation between the number of \gls*{mavs} vs. energy consumption, when varying the {\em recognition} and {\em advice} error ratios from 70\% (light-gray), to 80\% (gray) and 90\% (dark-gray), and $s_{i}$ = 5 m/s. The vertical axis represents the energy consumption (joules). The horizontal axis represents the number of \gls*{mavs}.}%
 \label{fig:energy-consumption}%
\end{figure*}

\begin{table}[!b]
\caption{Characteristics of each scenario \label{tab:landmarks}}
\centering
  \includegraphics[width=12cm]{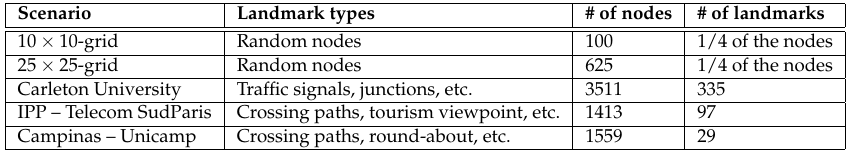}
\end{table}

\subsection{Performance Evaluation}
Table~\ref{tab:landmarks} shows some representative characteristics of each of the previous scenarios.  Figure~\ref{fig:results} shows the relation between the number of \gls*{mavs} and error probability of the majority rule varying the {\em recognition} and {\em advice} error ratios between 70\%, 80\%, and 90\% using, respectively, the $10\times 10$-grid, $25\times 25$-grid, 
Carleton, Telecom SudParis, and Unicamp scenarios. The results in Figure~\ref{fig:results} also show that the majority rule applied to a swarm of $m$ MAVs reduces the probability of error of the recognition process and our solution benefits from swarm cooperation.

\subsection{Energy Evaluation}

The energy consumed in a straight flight $E_{fly}$ to move a distance $d$ at a given speed $s_{i}$ can be computed as the integral of the power $P(s_{i})$ in function of the given speed $s_i$ along the time \cite{di2015energy}:

\begin{equation}
\label{eq:degree}
    E_{fly}(d,s_{i}) = { \int_{t=0}^{t=d/s_{i}} P(s_{i})d t = P(s_{i}) \frac{d}{s_{i}} }
\end{equation}

Figure~\ref{fig:energy-consumption}  shows the relating the number of \gls*{mavs} and the energy consumption varying the {\em recognition} and {\em advice} error ratios between 70\%, 80\%, and 90\% in the aforementioned scenarios ($10\times 10$-grid, $25\times 25$-grid, Carleton University, Telecom SudParis, and Unicamp, respectively). The results show that the energy consumption of the \gls*{mavs} reduces to approximately~$35\%$ when the number of \gls*{mavs} increases. 

\section{Conclusions}
\label{sec:conc}

\noindent We have presented an error tolerant path planning algorithm for
\gls*{mav} swarms. We have assumed a navigation system 
in which the \gls*{mavs} find their path by using their on board
cameras, by identifying and following series of visual landmarks. We
have assumed landmarks a priori information,
but which interpretation by the \gls*{mavs} is error prone. We have defined
two types of errors: (1) {\em recognition} errors, e.g., due to faulty
sensors which misinterpret the sensed data, and (2) {\em advice}
errors caused by the landmarks, e.g., due to weather conditions or
outdated information. Our
solution benefits from swarm cooperation. If the \gls*{mavs} in the
swarm can communicate and exchange information, then the {\em recognition}
and {\em advice} error ratios {\em get minimized to one fourth} with the cost
of {\em increasing the total number of \gls*{mavs} by twenty}. We have
validated our proposal with appropriate simulations, implemented over
a Java simulator available at
\href{http://j.mp/mavsim}{http://j.mp/mavsim}. 

The recognition and advice algorithms presented are based on binary decision making. An interesting setting worth exploring is non-binary. For example, consider a swarm of five \gls*{mavs} required to decide for its next move based on the majority color of a door (say, among Yellow (Y), Green (G), Blue (B)); in this case it is assumed that Y is to be positive outcome while G, B negative outcomes. On the one hand, if the respective outputs of the visual systems are Y, G, Y, Y, B then Yellow Door is considered to be the positive outcome and occurs three out of five times. On the other hand, if the respective outputs of the visual systems are Y, G, Y, B, B, then there is no majority of identical colors. In particular, majority can be formed by three identical answers. Such situations can be handled using voting schemes and fuzzy logic, which would be the focus of future research.

The basic idea of our algorithms is to enhance quality of recognition and advice by having multiple \gls*{mavs} make a decision after exchanging information they have obtained. Naturally, this increases the cost of movement since multiple \gls*{mavs} will be traveling to a destination. Therefore, it would be interesting to look at trade-offs of the cost of the search that take into account either time or total energy versus the number of \gls*{mavs} in the swarm for a given  budget.

\appendixtitles{no} 
%
%

\end{paracol}
\reftitle{References}

\end{document}